\title{
Exact Count of Boundary Pieces of ReLU Classifiers: \\
Towards the Proper Complexity Measure for Classification}
\author[1]{Pawe\l~Piwek}
\author[2]{Adam~Klukowski}
\author[2]{\href{mailto:<hutianyang.up@outlook.com>?Subject=Your UAI 2023 paper}{Tianyang~Hu}}
\affil[1]{%
    University of Oxford\\ 
    \texttt{pawel.piwek@maths.ox.ac.uk}
}
\affil[2]{%
    Huawei Noah's Ark Lab\\
    \texttt{hutianyang1@huawei.com}
}
\renewcommand{\vec}[1]{\mathbf{#1}}
\newcommand{\x}{\vec{x}}
\newcommand{\y}{\vec{y}}
\newcommand{\R}{\mathbb{R}}
\newcommand{\U}{\mathcal{U}}
\newcommand{\T}{\mathcal{T}}
\crefname{example}{Example}{Examples}
\crefname{proposition}{Proposition}{Propositions}
\begin{document}

\maketitle

\begin{abstract}
Classic learning theory suggests that proper regularization is the key to good generalization and robustness. In classification, current training schemes only target the complexity of the classifier itself, which can be misleading and ineffective. 
Instead, we advocate directly measuring the complexity of the decision boundary. 
Existing literature is limited in this area with few well-established definitions of boundary complexity. As a proof of concept, we start by analyzing ReLU neural networks, whose boundary complexity can be conveniently characterized by the number of affine pieces. With the help of tropical geometry, we develop a novel method that can explicitly count the exact number of boundary pieces, and as a by-product, the exact number of total affine pieces. Numerical experiments are conducted and distinctive properties of our boundary complexity are uncovered. First, the boundary piece count appears largely independent of other measures, e.g., total piece count, and $l_2$ norm of weights, during the training process. Second, the boundary piece count is negatively correlated with robustness, where popular robust training techniques, e.g., adversarial training or random noise injection, are found to reduce the number of boundary pieces. 

\end{abstract}

\section{Background}

Despite deep learning's huge success in image classification, naturally trained deep classifiers are found to be adversarially vulnerable \citep{goodfellow2014explaining, goodfellow2016deep}. 
By adding a small perturbation (adversarial attack) to an image, which is almost imperceptible to humans, the neural network's predicted class can be arbitrarily manipulated. 
The prevalence of adversarial examples for state-of-the-art deep classifiers, even on small datasets such as CIFAR \citep{krizhevsky2009learning}, suggests overfitting, where decision boundaries of trained deep neural networks (DNNs) are \textit{overly complicated} and within a small distance to almost all the training instances. 
Ideally, we want our model to generalize well on unseen data and be robust against small input perturbations, i.e., the prediction doesn't change much in case of small random noises. 
For regression, the requirement loosely translates to the smoothness of the predictor function. 
However, it becomes drastically different for classification, due to the discrete nature of class labels. 

The goal of classification is to recover the Bayes optimal decision boundary with the lowest misclassification rate (0-1 loss). 
Decision boundary corresponds to certain level sets of the classifiers, which is more difficult to control than the classifier itself. 
As is often the case, especially in image classification, the classes can be thought of as separable with positive margins, i.e., the class labels have no randomness and images in different classes reside in non-overlapping regions with positive pairwise distances. 
In this case, there are infinitely many possible decision boundaries with zero misclassification error, but only some of them are robust with good generalization properties. 
Current training methods offer little control over the selection process and the resulting decision boundaries often turn out to be unsatisfactory. 
For natural data, it is commonly believed that an ideal decision boundary (e.g., human's), which offers both good accuracy and robustness, should not be too complicated. 
In practice, how to effectively find such decision boundaries can be a real challenge.

Let $\cF$ denote some function space.
In learning theory, the model complexity (how large is $\cF$) is of critical importance, especially for model generalization and robustness \citep{vapnik1999overview, bousquet2002stability, james2013introduction}. 
Certain types of regularization are necessary to prevent over-complication and overfitting of the training data. The same is also true in deep learning, where modern networks are usually overparametrized. 
Various regularization techniques have been developed for training DNNs, e.g., weight decay, dropout \citep{srivastava2014dropout}, batch normalization \citep{ioffe2015batch}, early stopping \citep{prechelt1998early}, etc.
Though their regularization effects are largely implicit, a variety of implicit biases have been recently identified \citep{woodworth2019kernel, chizat2020implicit, razin2020implicit, hu2021regularization, ding2023random}. 
Nevertheless, without exception, all aforementioned types of regularization are on the \textit{functional} level, i.e., regularizing $\cF$ with respect to some complexity measurement. 
However, as we will point out in the next section, the complexity of $\cF$ itself is not of the most interest in classification. Instead, what matters the most are the \textit{level sets} of $\cF$.

\section{Proper Regularization for Classification}
For a function $f:\RR^d\to\mathbb{R}$, let
$\|f\|_\infty=\sup_{\bx\in\RR^d}|f(\bx)|$. 
Let $\PP$ be a probability measure on $\RR^d$ and denote $d_\triangle(G_1, G_2)= \PP(G_1\triangle G_2)= \PP\left((G_1\backslash G_2)\cup (G_2\backslash G_1)\right)$ as the measure of the symmetric difference of sets in $\RR^d$.

Consider the binary classification setting where $\bx\in\RR^d$, $y\in\{-1, 1\}$. 
Let the conditional probability $\eta(\bx)=\PP(y=1|\bx)$. 
Given $\eta(\bx)$, the Bayes optimal decision rule is to assign label $1$ if $\eta(\bx)\ge 1/2$ and label $-1$ if $\eta(\bx) < 1/2$. If the two classes are separated (the supports of two class distributions are disjoint), $\eta$ is a piecewise constant function taking values only from $\{0, 1\}$.
The 0-1 loss is not friendly for optimization \citep{bartlett2006convexity}. Thus, various surrogate losses are employed in practice, e.g., cross-entropy, hinge loss, etc.
In statistics literature, there are two types of assumptions for classification \citep{audibert2007fast}, one on the conditional probability and the other on the decision boundary. 
Classification by estimating the conditional probability is usually referred to as "plug-in" classifiers and it's worth noting that it essentially reduces classification to regression. In comparison, estimating the decision boundary is more fundamental \citep{hastie2009elements}. Hence, characterizing the decision boundary is of critical importance.

\subsection{From Function Space to Level Set}
The goal of classification is to recover the Bayes optimal decision boundary, which divides the input space into non-overlapping regions with respect to labels.  
Therefore, classification is better to be thought of as \emph{estimation of sets} in $\RR^d$, rather than estimation of functions on $\RR^d$.
This is because the set difference reflects the 0-1 loss much more directly than functional norms on $\cF$. To be more specific, if $f\in\cF$ approximates $\eta$ so well that $\|f(\bx)-\eta(\bx)\|_\infty \le 2\epsilon$, there is still no guarantee of matching the sign of $\eta(\bx)-1/2$ close to the decision boundary. Consider a noisy scenario, where the label we observe is flipped relative to the true label with probability $\left(\tfrac{1}{2} - \epsilon\right)$. Then the misclassification rate of $f$ could be arbitrarily bad. 
In contrast, if we have a good estimation of the set $G^* = \{\bx\in \RR^d: \eta(\bx)\ge 1/2\}$ such that $d_\triangle(\hat{G}, G^*)\le \epsilon$, the misclassification probability can be directly bounded by $\epsilon$.

\begin{figure}[t]
    \centering
    \includegraphics[width=0.48\textwidth
    ]{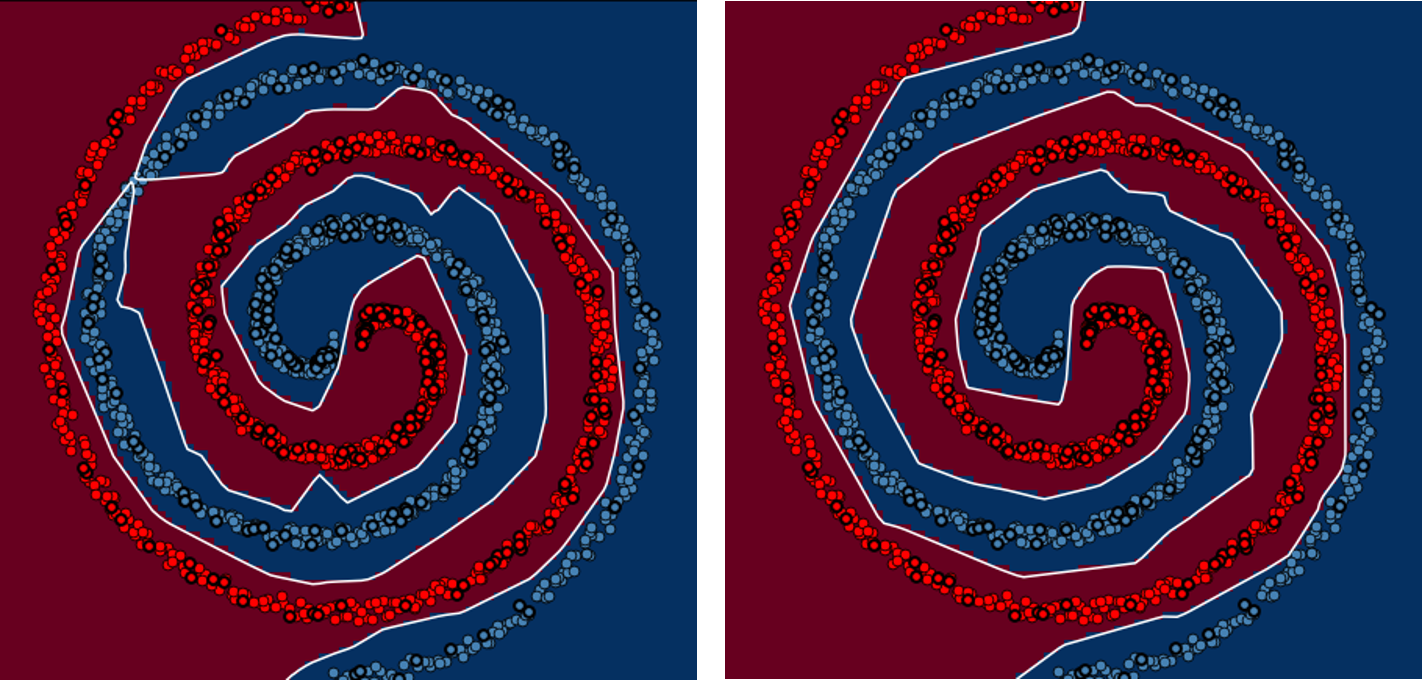}
    \caption{Illustration of a difficult classification task in $[-1,1]^2$ using ReLU classifiers. Two classes (blue and red) are separated. Among all the points, only 300 in each class are training samples, marked with \textbf{thickened} outline. The left figure is from regular training, achieving 99.65\% test accuracy; the right figure is from adversarial training, achieving 100\% test accuracy. The decision boundary on the right is more robust and noticeably less complicated. }
    \label{fig:spiral}
\end{figure}

In practice, the deep classifier is parametrized by a neural network $f\in\cF$ and the decision boundary is its \emph{level set}, $G_f:=\{\bx\in\RR^d: f(\bx)=0\}$, which is modeled \emph{implicitly}. 
Let $\cG = \{G_f: f\in\cF\}$. 
Notice that regularizing $f$ may have no effect on $G_f$ since the level set is invariant to scaling of $f$.
To be more specific, $f(\bx)$ and $\lambda \cdot f(\bx)$ have the same level set, and as $\lambda\to 0$, the majority of commonly used function norms $\|f(\bx)\|$ will tend to zero.
Hence, the complexity of $\cF$ and the complexity of $\cG$ may not be closely connected.

When explicit regularization is absent in training deep classifiers, one may hope the decision boundary complexity is implicitly regularized, either from the model architecture or the training techniques. Unfortunately, this is not supported by empirical evidence in robust transfer learning \citep{shafahi2019adversarially}. Given an adversarially robust teacher model, e.g., from adversarial training, only by vanilla knowledge distillation \citep{hinton2015distilling} and fitting the input-output relationship, the resulting student model, no matter the size, does not retain robustness. 
To achieve comparable robustness, data augmentation on the input space such as mixup samples \citep{muhammad2021mixacm}, or matching intermediate features \citep{goldblum2020adversarially} seems indispensable. 
While matching the classifiers cannot transfer robustness, matching the decision boundary from teacher to student obviously can. 
From this perspective, various data augmentations can be viewed as regularization of the input space, on the decision boundary.   

Adversarial training, noise injection, and margin maximization can all be viewed as means of boundary regularization, pushing decision boundaries away from training samples. 
We show empirically that these methods lead to a significant reduction in boundary complexity, even though their design motivation was different.
Adversarial training can be also viewed as a special form of gradient regularization \citep{lyu2015unified}, or data-dependent operator norm regularization \citep{roth2019adversarial}. Among others, \citet{chan2019jacobian} proposed to directly regularize the saliency of the classifier's Jacobian to improve robustness. 
Adversarial robustness is also shown to improve by replacing the ReLU activation with smooth functions \citep{xie2020smooth}, and modifying the loss function \citep{pang2019rethinking, bao2020calibrated, hu2021understanding}. 
Although the classifier gradient is more related to boundary complexity, these types of regularization methods inspired by adversarial training are not directly targeting the decision boundary. 

In this work, we advocate that for classification, the proper complexity to regularize is the boundary complexity of $\cG$, rather than the functional complexity of $\cF$.
A complexity measurement directly targeting the decision boundary will better reflect classification properties and may be largely independent of known metrics on the function space.

\subsection{Measuring Boundary Complexity}
Now that we have established boundary complexity as the proper, yet missing regularization in classification, the next question is how to measure it.
Compared to functions, boundary complexity measurement is far less explored. 
In statistics literature, classification has been analyzed as a nonparametric estimation of sets problem where the convergence rate critically depends on the complexity of the hypothesis class and the estimator class \citep{mammen1999smooth}.
However, the typical complexity measurements, e.g., bracketing entropy, covering number, Rademacher complexity, etc. are on the group level and cannot evaluate a single set (decision boundary). 
For general classifiers, how to properly quantify the boundary complexity remains an open problem.
\cite{chen2019topological} utilized persistent homology to measure the topological complexity of decision boundaries. \cite{lei2022understanding} characterized boundary complexity by their variability with respect to data and algorithm randomness. \cite{yang2020boundary} proposed the concept of boundary thickness and demonstrated its relationship to classification robustness. However, the aforementioned characterizations of boundary complexity are highly abstract and not explicitly calculable.  

To this end, we consider specifically classifiers with Rectified Linear unit (ReLU) activation, whose decision boundary is piecewise linear, and the boundary complexity can be conveniently characterized by the number of affine pieces, which is intuitive and visually accessible. 
In Figure \ref{fig:spiral}, the left decision boundary has 491 affine pieces while the right one has only 254.
As can be seen in the figure, the less complicated boundary generalizes better and is more robust. 

\begin{remark}[Boundary pieces]
    The count of boundary pieces of ReLU networks might be overly simplified for classification problems, since it does not take the length of each piece and their overall structure into consideration. However, it does offer unique benefits. Besides being intuitive and visually accessible, it also bridges the complexity of the ReLU network itself. It would be interesting to see the relationship between the count of boundary pieces and the total number of linear pieces during training. Other boundary complexities, e.g., boundary thickness, have no counterpart in the function space. 
\end{remark}

For ReLU neural networks, the structure of the affine pieces and, in particular, the number of distinct pieces have been objects of interest.
Sharp bounds (exponential with depth) on the maximum number of affine regions have been investigated \citep{montufar2014number}, demonstrating the benefit of deeper networks. 
\cite{hanin2019complexity} provided a framework to count the number of linear regions of a piecewise linear network.
A method for upper-bounding the number of affine regions \emph{locally} in a ball around a data point was developed in \cite{zhu2020bounding}. Interestingly, both experiments of \cite{zhu2020bounding} on the local number of affine regions and ours on the global count of boundary pieces indicate a two-stage behavior during training.

In classification, we are interested in the boundary pieces (level set) more than in affine regions, and existing literature there is scarce. 
For counting, previous works only compute a \emph{superset} of the decision boundary and therefore give only upper bounds on the exact number (see Proposition 6.1. in \cite{zhang2018tropical} and \cite{alfarra2020decision}).
For linking the count to classification, to the best of the authors' knowledge, the only relevant work is \cite{hu2020sharp}, where a teacher-student classification setting is considered and upper bounds on boundary pieces (bracketing entropy) in ReLU classifiers are utilized to bound the generalization error. Interestingly, \cite{hu2020sharp} showed that when the student network is larger than the teacher, if the boundary complexity is not regularized, the 0-1 loss excess risk convergence rate will not be rate-optimal. 

As we illustrated before, a ReLU network and its level set may share little connection. Calculating the number of boundary pieces is a new and technically challenging problem. Although there might be other ways to characterize the boundary complexity, the boundary piece count does provide a valid starting point for this problem. 

\subsection{Contributions}

In this work, we study the boundary complexity of ReLU classifiers and investigate the number of affine pieces in the decision boundary. The contributions are
\begin{itemize}
    \item 
    With the help of tropical geometry, we provide a novel explicit algorithm for counting the exact number of boundary pieces and affine regions of ReLU networks. In contrast to \cite{zhang2018tropical} and \cite{alfarra2020decision}, we do not require the weights to be integer-valued. Unlike the algorithm of \cite{zhu2020bounding}, which discards some information at each layer, our approach preserves a complete representation of a neural network's functional form.
    
    \item
     We empirically investigate our proposed boundary complexity during training and interesting properties are revealed. 
     First, the boundary piece count is largely independent of other measures during training.
     They (e.g., boundary count, total piece count, and $l_2$ norm of weights) share little similarity during the training process.
     Second, the boundary piece count is negatively correlated with robustness. Adversarial training and noise injection are found to have significant regularizing effects on boundary complexity. 
\end{itemize}

\section{Boundary Complexity of ReLU Networks}

A few works \cite{alfarra2020decision, charisopoulos2018tropical, hertrich2021towards, maragos2021tropical, montufar2021sharp, trimmel2020tropex, zhang2018tropical} on this topic used the ideas of \emph{tropical geometry} - an area of algebraic geometry studying surfaces over the max-plus semi-ring \cite{maclagan2009introduction}. The connection to ReLU networks comes from them being compositions of affine transformations and the rectified linear unit \(\sigma(x) = \max \{0,x\}\). This enables us to write the network as a difference between two convex piece-wise affine functions.
These, in turn, can be interpreted in a useful way in a \emph{dual space}, where affine functions are points and maximum functions correspond to upper convex hulls.
This interpretation allowed \citeauthor{zhang2018tropical} to reprove the best bounds for the largest possible number of affine regions a ReLU network with a given architecture may have.

This section expands on the tropical geometry perspective of ReLU networks. Our main theoretical result is a way to explicitly compute the zero set of a difference of two convex piecewise-affine functions---and therefore compute the exact count of boundary pieces of a ReLU network. 
To improve the readability, we include necessary preliminary results and rephrase them into consistent technical language.
The proofs are mostly omitted and can be found in the appendix.

Let's start with a proposition taken from \cite{magnani2009convex}.
\begin{proposition}\label[proposition]{CPLs}
    A function of the form
    \[f(\x) = \max_{i=1,\ldots,n}\{A_i\x+b_i\}\]
    is convex and piecewise-affine. Also, every convex piecewise-affine function with a finite number of linear pieces is of this form.
\end{proposition}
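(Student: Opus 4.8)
The plan is to prove the two implications separately. The forward direction---that any pointwise maximum of finitely many affine maps is convex and piecewise-affine---is the routine half. Convexity is immediate, since each $A_i\x+b_i$ is affine (hence convex) and the pointwise maximum of convex functions is convex. For the piecewise-affine claim I would exhibit the polyhedral partition explicitly: set $R_i=\{\x : A_i\x+b_i \ge A_j\x+b_j \text{ for all } j\}$. Each $R_i$ is an intersection of finitely many closed half-spaces, hence a polyhedron; the $R_i$ cover $\RR^d$; and on $R_i$ the function $f$ coincides with the single affine map $A_i\x+b_i$. That is exactly the definition of a piecewise-affine function with finitely many linear pieces.

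The substantive direction is the converse. Suppose $f$ is convex and piecewise-affine with finitely many pieces. First I would reduce to the full-dimensional pieces: let $P_1,\dots,P_m$ be the full-dimensional polyhedral regions on which $f$ agrees with affine functions $g_k(\x)=A_k\x+b_k$. Because $f$ is finite-valued and convex on all of $\RR^d$ it is continuous, so the lower-dimensional interfaces carry no extra information and $\bigcup_k P_k=\RR^d$. The goal is to show $f(\x)=\max_k g_k(\x)$ pointwise. One inequality is easy: every $\x$ lies in some $P_k$, where $f(\x)=g_k(\x)\le\max_j g_j(\x)$, so $f\le\max_k g_k$.

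The crux---and the step I expect to be the main obstacle---is the reverse inequality, namely that \emph{each} affine piece under-estimates $f$ \emph{globally}, i.e.\ $g_k(\x)\le f(\x)$ for all $\x\in\RR^d$ rather than merely on $P_k$. This is where convexity is essential. I would pick an interior point $\x_k$ of the full-dimensional region $P_k$; there $f$ is differentiable with $\nabla f(\x_k)=A_k^\top$, and the first-order characterization of convexity yields $f(\x)\ge f(\x_k)+A_k(\x-\x_k)=g_k(\x)$ for every $\x$. Equivalently, $g_k$ is a supporting affine function of the epigraph of $f$ along $P_k$. Taking the maximum over $k$ then gives $\max_k g_k\le f$, and combined with the easy inequality we obtain equality. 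The delicate points to get right are justifying differentiability at interior points of a convex piecewise-affine function, and---more conceptually---recognizing that finiteness of the number of pieces is exactly what collapses the general representation of a convex function as a \emph{supremum} of its supporting affine functions into a \emph{maximum} over the finite set $\{g_1,\dots,g_m\}$.
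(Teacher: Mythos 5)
Your proposal is correct, and it actually proves strictly more than the paper does. For the forward direction you and the paper coincide: the paper's entire proof is the one-line convexity computation (evaluate the max at $t\x+(1-t)\y$ using whichever index attains it), and your explicit polyhedral partition into the regions $R_i$ supplies the piecewise-affine half, which the paper treats as evident. The converse--the substantive half of the statement--is not proved in the paper at all; it is attributed to the cited reference (Magnani and Boyd), and your argument is essentially the standard one found there: restrict to the full-dimensional cells $P_k$, note $f\le\max_k g_k$ on a dense set and hence everywhere by continuity, and obtain the reverse inequality $g_k\le f$ globally from the first-order convexity inequality at an interior point of $P_k$, where $f$ is locally affine and hence differentiable with gradient $A_k^\top$. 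The two points you flag as delicate are indeed the right ones and both go through: differentiability is automatic where $f$ agrees with an affine function on an open set, and the closures of the finitely many full-dimensional cells cover $\R^d$ because finitely many lower-dimensional polyhedra cannot contain an open set. Your closing observation--that finiteness of the pieces is what turns the supremum of supporting affine functions into a maximum over a finite family--is exactly the conceptual content the paper leaves implicit.
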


We will proceed to abbreviate ``convex piecewise-affine'' to CPA and ``difference of convex piecewise-affine'' to DCPA. To be precise, by a ReLU network we mean a neural network where every activation function is the rectified linear unit.

\begin{proposition}\label[proposition]{DCPAs}
    Given any ReLU network, the function defined by it can be written as a DCPA function.
\end{proposition}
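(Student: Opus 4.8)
The plan is to induct on the depth of the ReLU network, carrying the inductive hypothesis that after each layer every output coordinate is a DCPA function, i.e.\ expressible as $g-h$ with $g,h$ convex piecewise-affine (CPA) in the sense of \Cref{CPLs}. Since a ReLU network is by definition an alternating composition of affine maps $\x\mapsto W\x+\mathbf{b}$ with componentwise applications of $\sigma(x)=\max\{0,x\}$, it suffices to show that the DCPA class is preserved under each of these two operations; the base case is immediate because each input coordinate is affine, hence CPA, hence trivially DCPA (write it as the coordinate minus $0$).

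For the affine step I would first record that the DCPA class is closed under linear combinations and the addition of constants. Writing $f_k=g_k-h_k$, a combination $\sum_k\lambda_k f_k+c$ is rearranged by moving the terms with $\lambda_k<0$ to the opposite side, so the expression becomes a sum of nonnegatively scaled convex functions minus another such sum; nonnegative scaling and addition preserve both convexity and the piecewise-affine property, so the result is again DCPA. Hence each coordinate of $W\x+\mathbf{b}$, being an affine combination of the DCPA coordinates feeding into it, remains DCPA.

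The key step, and the one I expect to carry the real content, is the ReLU step: showing $\sigma(f)$ is DCPA whenever $f=g-h$ is. The identity I would use is
\[
\max\{0,\,g-h\} \;=\; \max\{g,h\} \;-\; h,
\]
which holds because subtracting $h$ commutes with the maximum. Here $\max\{g,h\}$ is a maximum of two convex functions, hence convex, and is piecewise-affine since $g$ and $h$ are, so $\max\{g,h\}$ is CPA; together with the CPA function $h$ this exhibits $\sigma(f)$ as a difference of two CPA functions, i.e.\ DCPA. If one prefers the explicit representation of \Cref{CPLs}, writing $g=\max_i\{A_i\x+b_i\}$ and $h=\max_j\{C_j\x+d_j\}$ makes the closure facts transparent: $\max\{g,h\}$ simply pools the two lists of affine pieces, while $g+h=\max_{i,j}\{(A_i+C_j)\x+(b_i+d_j)\}$ shows closure under addition directly.

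Combining these, one hidden layer sends a tuple of DCPA functions to a tuple of DCPA functions, so by induction the whole network computes a DCPA function. The only subtlety worth double-checking is bookkeeping at the vector level: the ReLU identity must be applied coordinatewise to the vector-valued pre-activations, and the affine closure must account for mixing coordinates across the width of a layer---but both reduce cleanly to the two closure properties established above, so I do not anticipate any obstruction there.
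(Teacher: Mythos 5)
Your proof is correct and follows essentially the same route as the paper's: induction over layers, handling the affine step by splitting the weight matrix into positive and negative parts (the paper's $A = A_+ - A_-$ giving $A(F-G) = (A_+F + A_-G) - (A_-F + A_+G)$), and handling the activation via the shift identity for maxima (your $\max\{0,g-h\}=\max\{g,h\}-h$ is the special case of the paper's ``max of two DCPAs is a DCPA,'' proved from $\max\{a,b\}+c=\max\{a+c,b+c\}$). No gaps; the closure facts for CPA functions under pooling and pairwise sums of affine pieces are exactly the ones the paper relies on.
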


Conversely, \cite{ovchinnikov2002max} proved that any piecewise-affine function
with a finite number of linear regions
is a min-max polynomial in its component affine functions.
This implies that it can be written as a DCPA function
and so -- represented by a ReLU network.

\subsection{Tropical Geometry}

In this section, we introduce the aforementioned
interpretation of CPAs in the \emph{dual space} \(\mathbf{D}\).
It may resemble a projective involution,
which makes it even more surprising that notions such as convex hull turn out useful.
We make no distinction between affine functions \(f : \x \mapsto \vec{a}^\intercal \x + b\)
and their graphs \(\{(\x, y) \in \R^{d+1}\ |\ y = f(\x)\}\).
Thus, we identify affine functions \(\R^d \rightarrow \R\)
with hyperplanes in \(\R^{d+1}\)
containing no vertical lines (\(\{\x_0\} \times \R \subseteq \R^{d+1}\)
for some \(\x_0 \in \R^d\));
this ambient \(\R^{d+1}\) will be called the \emph{real space}
and denoted \(\mathbf{R}\).

We make effort to distinguish between \(\mathbf{R}\) and \(\mathbf{D}\)
as both are copies of \(\R^{d+1}\) which may cause confusion.

\begin{definition}
    We say that \((\x, y)\) lies above (the graph of) \(f\) when \(y > f(\x)\). We denote it by \((\x, y) \succ f\).
\end{definition}

\begin{definition}
    For an affine function \(f: \R^d \to \R\) given by \(f(\x) = \vec{a}^\intercal \x + b\),
    we define its \emph{dual} \(\cR^{-1}(f)\)
    as the point \((\vec{a},b)\in \R^{d+1} =: \mathbf{D}\).
    Accordingly, this \(\R^{d+1}\) will be called the \emph{dual space} and denoted \(\mathbf{D}\).
    Conversely, for a dual point \(\vec{c} = (\vec{a}, b) \in \mathbf{D}\),
    we define \(\cR(\vec{c})\) to be the affine function \(\x \mapsto \vec{a}^\intercal \x + b\)
    (i.e. a hyperplane in \(\mathbf{R}\)).
\end{definition}

As we will see from~\cref{duality},
\(\cR\) turns out to interchange the relations of collinearity and concurrence,
extend to planes of any dimensionalities,
preserve orthogonality and sides of hyperplanes.
For consistency, we set: 
\begin{definition}
    To a real point \(\vec{z} = (\x, y) \in \mathbf{R}\),
    we associate as its dual the following hyperplane in \(\mathbf{D}\)
    \[\cR^{-1} (\vec{z}) = (\vec{a} \mapsto (-\x)^\intercal \vec{a} + y).\]
    Conversely, to a dual hyperplane
    \(H = (\vec{a} \mapsto \x^\intercal \vec{a} + y) \subset \mathbf{D}\),
    we associate the real point \[\cR(H) = (-\x, y) \in \mathbf{R}.\]
\end{definition}

Note that the correspondence between dual hyperplanes and real points has an extra sign not present in the pairing of dual points with real planes.

\begin{proposition}\label[proposition]{duality}
    The duality \(\cR\) has the following properties:
    \begin{enumerate}
        \item 
        A dual point \(\vec{c} \in \mathbf{D}\)
        lies on a dual hyperplane \({H \subset \mathbf{D}}\)
        if~and~only~if the corresponding real hyperplane \(\cR(\vec{c}) \subset \mathbf{R}\)
        contains the point \(\cR(H) \in \mathbf{R}\).
        I.e.
        \[\vec{c} \in H \Leftrightarrow \cR(\vec{c}) \ni \cR(H).\]
        \item 
        Points of a dual \(k\)-dimensional plane \(F\) are precisely the duals of real hyperplanes containing some \((d-k)\)-dimensional real plane. We denote this common real \((d-k)\)-dimensional hyperplane as \(\cR(F)\).
        \item 
        Duality is containment-reversing, i.e., \[F \subseteq G \Leftrightarrow \cR(F) \supseteq \cR(G)\] for dual planes \(F, G\), and analogously for \(\cR^{-1}\).
        \item 
        For any real hyperplane \(f\), the projection \(p(\cR^{-1}(f))\) of its dual \(\cR^{-1}(f)\) onto the first \(d\) coordinates is normal to its isolines \(\{\x\ |\ f(\x) = \text{const.}\}\).
        \item 
        Dual point \(\vec{c} \in \mathbf{D}\) lies above the graph of \(H \subset \mathbf{D}\)
        if and only if the real point \(\cR (H) \in \mathbf{R}\)
        lies below the graph of \(\cR(\vec{c}) \subset \mathbf{R}\).
        In symbols
        \[\vec{c} \succ H \Leftrightarrow \cR(\vec{c}) \succ \cR(H).\]
        \item 
        Points \(\vec{c}, \vec{c}'\) that differ only in the \((d+1)\)-th coordinate
        (lie exactly above/below each other)
        correspond precisely to parallel planes (both under \(\cR\) and \(\cR^{-1}\)).
    \end{enumerate}
\end{proposition}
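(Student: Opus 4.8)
The plan is to reduce all six properties to one bilinear incidence pairing and then read each item off as a one-line consequence or a short computation. For a dual point $\vec{c}=(\vec{a},b)\in\mathbf{D}$ and a real point $\vec{z}=(\x,y)\in\mathbf{R}$ I would introduce
\[
\Phi(\vec{c},\vec{z})\;=\;y-\vec{a}^{\intercal}\x-b,\qquad \vec{c}=(\vec{a},b),\ \vec{z}=(\x,y).
\]
Unwinding the definitions shows that $\vec{z}$ lies on $\cR(\vec{c})$ exactly when $\Phi(\vec{c},\vec{z})=0$, and — because of the sign inserted into the pairing of real points with dual hyperplanes — that $\vec{c}$ lies on $\cR^{-1}(\vec{z})$ under the \emph{same} equation $\Phi(\vec{c},\vec{z})=0$. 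Putting $\vec{z}=\cR(H)$ (equivalently $H=\cR^{-1}(\vec{z})$) turns this single identity into part~(1), so (1) is just the symmetry of $\Phi$ in its two slots and costs only the substitution of the definitions.

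Parts (4), (5), (6) are then direct checks against the formula for $\Phi$. For (4), $p(\cR^{-1}(f))=\vec{a}$ is by definition the linear part of $f$, which is manifestly orthogonal to every isoline $\{\x:\vec{a}^{\intercal}\x=\text{const}\}$. For (5) I would note that $\Phi$ controls \emph{sides}, not merely incidence: $\vec{z}$ lies below $\cR(\vec{c})$ iff $\Phi(\vec{c},\vec{z})<0$, whereas $\vec{c}\succ H$ unwinds (again via the extra sign in the real-point/dual-hyperplane duality) to $-\Phi(\vec{c},\cR(H))>0$, which is the identical inequality; this is exactly the claimed reversal of above and below. Part (6) is immediate, since $\cR$ fixes the linear part $\vec{a}$ and moves only the offset $b$, while parallel hyperplanes are precisely those sharing a linear part.

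The real content is part (2), from which (3) follows quickly. Writing $\cR(F):=\bigcap_{\vec{c}\in F}\cR(\vec{c})$, part~(1) gives $\vec{z}\in\cR(F)\Leftrightarrow F\subseteq\cR^{-1}(\vec{z})$, so every $\cR(\vec{c})$ with $\vec{c}\in F$ contains $\cR(F)$ and the forward direction of the ``precisely'' is immediate. Two things remain: that $\cR(F)$ has dimension $d-k$, and that \emph{every} hyperplane through $\cR(F)$ is some $\cR(\vec{c})$ with $\vec{c}\in F$. For the dimension, since $\Phi$ is affine in $\vec{c}$, spanning $F$ by $k+1$ affinely independent points $\vec{c}_0,\dots,\vec{c}_k$ collapses the defining conditions to the $k+1$ equations $\Phi(\vec{c}_i,\cdot)=0$, whose linear parts are the covectors $(-\vec{a}_i,1)$; if these are independent the solution set has codimension $k+1$, hence $\dim\cR(F)=d-k$. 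The reverse inclusion then follows from involutivity: the same construction sends the $(d-k)$-plane $\cR(F)$ back to a $k$-plane containing $F$, and equal dimensions force $\cR^{-1}(\cR(F))=F$, so a hyperplane contains $\cR(F)$ iff its dual point lies in $F$; and (3) drops out because $F\subseteq G$ makes the intersection defining $\cR(G)$ a sub-intersection of the one defining $\cR(F)$, whence $\cR(F)\supseteq\cR(G)$, with the converse supplied by this bijectivity.

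The main obstacle is precisely the independence of the covectors $(-\vec{a}_i,1)$: affine independence of the $\vec{c}_i$ in $\mathbf{D}$ does \emph{not} by itself guarantee it, and it fails exactly when the direction space of $F$ contains the vertical $(\vec{0},1)$ — in which case part~(6) exhibits two points of $F$ whose duals are parallel, disjoint hyperplanes, so the naive intersection $\cR(F)$ drops below dimension $d-k$ (it has moved to the plane at infinity). I would handle this either by restricting to planes transverse to the vertical direction — the generic situation for the affine pieces of a ReLU network — or, more cleanly, by passing to homogeneous coordinates: embedding $\vec{c}\mapsto(\vec{a},b,-1)$ and $\vec{z}\mapsto(-\x,-1,-y)$ into $\R^{d+2}$ turns $\Phi$ into the ordinary inner product, so $F$ lifts to a $(k+1)$-dimensional linear cone, its orthogonal complement is always $(d+1-k)$-dimensional, and de-homogenizing recovers the $(d-k)$-plane $\cR(F)$ with every degeneracy cleanly accounted for as intersection with the plane at infinity.
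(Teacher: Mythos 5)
Your proof is correct and takes essentially the same route as the paper's: every item is reduced to the single incidence identity $b=\x^\intercal\vec{a}+y$ (your $\Phi=0$), parts (4)--(6) are the same one-line checks, and part (2) is the same span-plus-dimension-count argument, merely run from the real side ($\cR(F)=\bigcap_{\vec{c}\in F}\cR(\vec{c})$) instead of writing $F$ as an intersection of dual hyperplanes. The one substantive difference is that you explicitly identify and repair the degenerate case where the direction space of $F$ contains the vertical direction $(\vec{0},1)$ --- there the points of $F$ include duals of parallel, disjoint real hyperplanes and the claimed common $(d-k)$-plane does not exist in affine space --- a case the paper's proof silently assumes away, so your homogeneous-coordinates fix is a genuine refinement rather than a detour.
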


The next proposition shows another property of the duality, crucial to our framework.

\begin{definition}\label[definition]{upper-hull}
    Let \(S \subset \R^{d+1}\) be a finite set of points.
    The convex hull of \(S\) will be denoted \(\mathcal C(S)\).
    Furthermore, we will call
    the set of points
    \[\{(\vec{x}, y) \in \mathcal C(S)\ |\
    (\vec{x}, y + \epsilon) \not\in \mathcal C(S)\ \text{for any}\ \epsilon>0\}\]
    the \emph{upper hull} of \(S\) and denote it 
    \(\U(S)\).
    Finally, the set of vertices of \(\U(S)\)
    will be denoted \(\U^*(S)\).
\end{definition}

\begin{proposition}\label[proposition]{max-hull}
    Let \(S \subset \mathbf{D}\) be a finite set of points.
    Then, for every point \(\x \in \mathbf{D}\) lying below \({\U}(S)\),
    we have (in \(\mathbf{R}\))
    \[\cR(\x) \leq \max\{\cR(\vec{s})\ |\ \vec{s} \in {\U}(S)\},\]
    i.e. the affine function in \(\mathbf{R}\) dual to \(\x\)
    lies fully below the maximum of the affine functions whose duals lie on \({\U}(S)\).
\end{proposition}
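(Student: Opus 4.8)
The plan is to reduce the claimed inequality of functions to a pointwise statement and then exploit one clean observation: evaluating a dual affine function is itself a linear functional on \(\mathbf{D}\). Fixing a point \(\vec{w}\in\R^d\), it suffices to prove \(\cR(\x)(\vec{w})\le\max_{\vec{s}\in\U(S)}\cR(\vec{s})(\vec{w})\) and afterwards quantify over \(\vec{w}\). Writing \(\x=(\vec{a},b)\) and a generic \(\vec{s}=(\vec{a}_s,b_s)\), we have \(\cR(\vec{s})(\vec{w})=\vec{a}_s^\intercal\vec{w}+b_s=\vec{v}^\intercal\vec{s}\) with \(\vec{v}:=(\vec{w},1)\in\R^{d+1}\). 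So for the fixed \(\vec{w}\), evaluation at \(\vec{w}\) is exactly the linear form \(\vec{s}\mapsto\vec{v}^\intercal\vec{s}\) on the dual space, and the feature driving the whole argument is that its last coordinate equals \(1>0\).

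Next I would unpack what ``\(\x\) lies below \(\U(S)\)'' means. Since \(S\) is finite, \(\mathcal C(S)\) is compact, so the vertical line through \(\x\) meets \(\mathcal C(S)\) in a closed segment whose top endpoint \(\x^+=(\vec{a},b^+)\) lies on \(\U(S)\) by the definition of the upper hull (\cref{upper-hull}); ``lying below'' is precisely the inequality \(b\le b^+\), so \(\x^+\) sits directly above \(\x\) and shares its first \(d\) coordinates. Because \(\x^+-\x=(\vec{0},\,b^+-b)\) has \(b^+-b\ge0\) and the functional \(\vec{v}^\intercal(\cdot)\) has a strictly positive last coordinate, passing from \(\x\) up to \(\x^+\) cannot decrease its value: \(\vec{v}^\intercal\x\le\vec{v}^\intercal\x^+\). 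Finally \(\x^+\in\U(S)\) is one of the competitors in the maximum, giving \(\vec{v}^\intercal\x^+\le\max_{\vec{s}\in\U(S)}\vec{v}^\intercal\vec{s}\). Chaining these and translating back yields \(\cR(\x)(\vec{w})\le\max_{\vec{s}\in\U(S)}\cR(\vec{s})(\vec{w})\), and since \(\vec{w}\) was arbitrary the claimed inequality of functions follows.

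I expect the only real subtlety---more a matter of care than a genuine obstacle---to be making ``below \(\U(S)\)'' precise and confirming that some point of \(\U(S)\) genuinely lies vertically above \(\x\); this rests on compactness of \(\mathcal C(S)\) (so a topmost point over \(\vec{a}\) exists) and on \(\vec{a}\) belonging to the projection of \(\mathcal C(S)\). The conceptual punchline worth stating is that the \emph{upper} hull is forced precisely because all evaluation functionals \((\vec{w},1)^\intercal\) have positive final coordinate, hence are maximized toward the top of \(\mathcal C(S)\) (cf.\ part~5 of \cref{duality}); it also shows one could replace \(\U(S)\) by all of \(\mathcal C(S)\) on the right without changing the maximum, a convenient sanity check.
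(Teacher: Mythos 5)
Your proof is correct and follows essentially the same route as the paper's: both arguments lift \(\x\) vertically to a point \(\x^+\in\U(S)\) directly above it and use the fact that increasing only the last dual coordinate can only raise the dual affine function's values, which is exactly the paper's first and third paragraphs repackaged via the evaluation functionals \(\vec{s}\mapsto(\vec{w},1)^\intercal\vec{s}\). You also correctly note that the paper's middle step (bounding \(\cR\) of a convex combination by the maximum over the segment's endpoints) is not needed for the literal statement, since the maximum here ranges over all of \(\U(S)\) rather than only its vertices \(\U^*(S)\) --- that extra step is what lets the paper later replace the infinite maximum by the finite one over \(\U^*(S)\).
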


\Cref{eg-max-hull} gives us a useful correspondence%
---each CPA function can be represented uniquely
as an upper-convex hull in the dual space.
This allows us to implicitly simplify the notation as well,
as illustrated in \Cref{eg-max-hull}.

\begin{example}\label[example]{eg-max-hull}
    Let us consider the function
    \[f(x) = \max\Big\{-x+3,\: -\tfrac{1}{2}x+2,\: \tfrac{1}{2}x,\: x-2,\: 0\Big\}.\]
    \cref{fig:eg-max-hull} draws it in both the real and dual space.
    \begin{figure}[ht]
        \centering
        \includegraphics[width=0.48\textwidth]{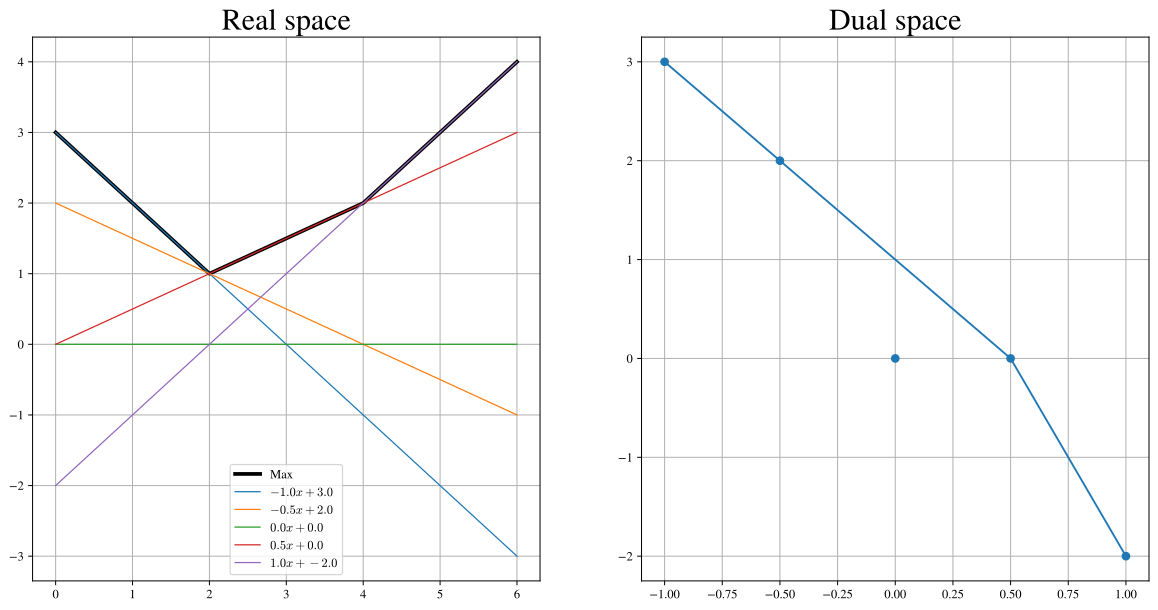}
        \caption{Real and dual diagrams in \cref{eg-max-hull}.}
        \label[figure]{fig:eg-max-hull}
    \end{figure}
    
    We can see that the points \((-\tfrac{1}{2}, 2), (0,0) \in \mathbf{D}\)
    corresponding to the functions \(y=-\tfrac{1}{2}x + 2\) and \(y=0\)
    lie respectively on and under the upper hull of the other points. This means that the functions \(y = -\tfrac{1}{2} x + 2, y=0\) never exceed the maximum of \(-x+3, \tfrac{1}{2}x,\, x-2\), but \(y=-\tfrac{1}{2}x + 2\) matches it at some point.
    
    In particular, we can write the maximum using just three of the functions.
    \begin{align*}
       & \max\Big\{-x+3,\: -\tfrac{1}{2}x+2,\: \tfrac{1}{2}x,\: x-2,\: 0\Big\}\\
       &= \max\Big\{-x+3,\:  \tfrac{1}{2}x,\: x-2\Big\}
    \end{align*}
\end{example}
\medskip

\subsection{ReLU Networks in the Context of Tropical Geometry}

This section shows precisely how to generate the dual diagram of a function defined by a neural network.

Let us denote by \(F_l: \R^{d} \to \R^{w_l}\) the function defined by the network taking the input to the post-activation values on the \(l\)-th layer (here \(w_l\) is the width of the \(l\)-th layer). This means that
\[F_l(\x) = \mathbf{\sigma}(A_l F_{l-1}(\x)).\]

Let us assume that \(F_{l-1} = \cR(P_{l-1}) - \cR(N_{l-1})\) for \(P_{l-1}\) and \(N_{l-1}\) being \emph{vectors} (ordered tuples) of sets of points.
We want to write \(F_l = \cR(P_l) - \cR(N_l)\) for \(P_l\) and \(N_l\) computed in terms of \(P_{l-1}\) and \(N_{l-1}\). For this, we need to introduce some notation.

\begin{definition}
    Given  sets of points \(X, Y \subset \mathbf{D} \cong \R^{d+1}\), we define
    \begin{itemize}
        \item \(X \oplus Y = \{\x + \vec{y}\ |\ \x\in X, \vec{y}\in Y\}\) to be the \emph{Minkowski sum of \(X\) and \(Y\)};
        \item \(X \cup Y\) to be the standard union of \(X\) and \(Y\) as sets.
    \end{itemize}
\end{definition}
We also define these operations on vectors of sets of points to be the coordinate-wise operations. 
These have important interpretations in our correspondence.

In the following, for a finite set \(X \subset \mathbf{D}\)
we identify \(\cR(X)\) with the function \(\max\{\cR(\vec{x})\ |\ \vec{x} \in X\}\)
being a maximum of hyperplanes in \(\mathbf{R}\).

\begin{proposition}\label[proposition]{basic_corr}
    For any sets of points \(X, Y \subset \mathbf{D}\), we have
    \begin{itemize}
        \item \(\cR(X \cup Y) = \max\{\cR(X), \cR(Y)\}\);
        \item \(\cR(X\oplus Y) = \cR(X) + \cR(Y)\).
    \end{itemize}
\end{proposition}

\begin{proof}
    The first one is clear from the definition. For the second one, we have
    \begin{align*}
        &\max\{x_1, \ldots, x_n\} + \max\{y_1, \ldots, y_m\} \\
        &= \max\{x_1+y_1, x_1+y_2, \ldots, x_n+y_m\}.
    \end{align*}
\end{proof}

Now, we need to define matrix multiplication for vectors of sets of points.

\begin{definition}
    Given \(S\subset\mathbf{D}\),
    we define the scalar multiplication \(\lambda\cdot S\) in the usual way.
    For a vector \(X = (X_i)_{1 \le i \le n}\) of sets of points in the dual space
    and for an \(n \times m\) matrix \(A\)
    we define the \emph{Minkowski matrix product of \(X\) by \(A\)} through
    \[(A\otimes X)_i = \bigoplus_{j=1}^{n} A_{ij} \cdot X_j.\]
\end{definition}

Notice that we could run into problems with just using the Minkowski operations,
since as long as \(S\) has at least 2 points,
we will have \(2\cdot S \neq S \oplus S\).
However, if we restrict ourselves to the vertices of upper convex hulls
and non-negative matrices the operations are `well-behaved'.

\begin{proposition}\label[proposition]{basic-properties}
    For matrices \(A,B\) with non-negative values and vectors of points \(X, Y_1, Y_2\),
    the following hold.
    \begin{itemize}
        \item \(\U^*\big((A+B)\otimes X\big) = \U^*\big((A \otimes X) \oplus (B \otimes X)\big)\);
        \item \(A \otimes (Y_1 \oplus Y_2) = (A\otimes Y_1) \oplus (A\otimes Y_2)\);
        \item \(AB\otimes X = A\otimes(B\otimes X)\);
        \item \(X \oplus (Y_1 \cup Y_2) = X \oplus Y_1 \cup X \oplus Y_2\).
    \end{itemize}
\end{proposition}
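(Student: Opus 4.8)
The plan is to split the four identities into two groups according to whether they hold as genuine equalities of point sets or only after passing to upper hulls. The second and fourth identities are pure set algebra. For the fourth, I would expand both sides coordinatewise and invoke the elementary distributivity $S \oplus (T \cup U) = (S\oplus T)\cup(S\oplus U)$, which is immediate from the definition of $\oplus$. For the second, the key observation is that ordinary scalar multiplication distributes over the Minkowski sum, $\lambda\cdot(S\oplus T) = (\lambda\cdot S)\oplus(\lambda\cdot T)$, since $\lambda(s+t)=\lambda s+\lambda t$ holds pointwise for \emph{every} scalar; combining this with commutativity and associativity of $\oplus$ lets me regroup $\bigoplus_j A_{ij}\cdot\big((Y_1)_j\oplus(Y_2)_j\big)$ into $\big(\bigoplus_j A_{ij}\cdot(Y_1)_j\big)\oplus\big(\bigoplus_j A_{ij}\cdot(Y_2)_j\big)$ coordinate by coordinate. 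Neither of these two needs non-negativity or the operator $\U^*$.

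The first and third identities are the subtle ones, and here I expect the main obstacle. They cannot hold as raw set equalities, because the Minkowski sum is not idempotent: already $2\cdot S\neq S\oplus S$ once $S$ has at least two points, and expanding $(A+B)\otimes X$ or $(AB)\otimes X$ produces exactly such collisions. The content of the statement is therefore that the upper-hull vertices---equivalently, the piecewise-affine functions the point sets represent---coincide. My strategy is to pass through the duality $\cR$ and the dictionary of \cref{basic_corr}. Using non-negativity of $A,B$ (so that $\cR(\lambda\cdot S)=\lambda\,\cR(S)$ for $\lambda\ge 0$), I would apply $\cR$ to both sides. For the first identity the left-hand side becomes $\sum_j (A_{ij}+B_{ij})\,\cR(X_j)$ and the right-hand side becomes $\sum_j A_{ij}\,\cR(X_j)+\sum_j B_{ij}\,\cR(X_j)$, visibly the same CPA function; for the third, both sides reduce to $\sum_{j,k} A_{ij}B_{jk}\,\cR(X_k)$.

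It then remains to convert equality of represented functions back into equality of upper-hull vertices. The bridge is that $\U^*(S)$ is determined by $\cR(S)$ alone. By \cref{max-hull}, $\cR(S)$ equals the maximum of $\cR(\vec{s})$ over $\vec{s}\in\U(S)$, so the vertices of the upper hull furnish a max-affine representation of $\cR(S)$, and this representation is the minimal one, since each vertex is the strict maximizer on a full-dimensional cell. I would then establish the auxiliary fact that the minimal max-affine representation of a CPA function (which exists by \cref{CPLs}) is unique---two affine pieces that are each strictly dominant on an open set and agree there must be equal---so that any two finite sets with the same image under $\cR$ share the same $\U^*$. Applying this to the two sides computed above yields $\U^*\big((A+B)\otimes X\big)=\U^*\big((A\otimes X)\oplus(B\otimes X)\big)$ and, at the level of upper hulls, $AB\otimes X = A\otimes(B\otimes X)$. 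The delicate point to get right is precisely this uniqueness argument: ensuring that upper-hull \emph{vertices}, rather than mere hull faces or dominated points, correspond to the essential pieces of the function.
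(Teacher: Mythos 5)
Your proof is correct but takes a genuinely different route from the paper's on the one nontrivial point. For the second and fourth identities you and the paper agree: these are raw set equalities requiring neither non-negativity nor hulls. For the first identity (and, as you rightly note, the third, since \((AB)_{ik}=\sum_j A_{ij}B_{jk}\) reintroduces the same non-idempotence issue even though the statement omits \(\U^*\) there), the paper reduces to the scalar identity \((a+b)X=(aX)\oplus(bX)\) modulo upper hulls and proves it by a direct double inclusion: \((a+b)X\subseteq(aX)\oplus(bX)\) by taking \(x_1=x_2\), while conversely \(ax_1+bx_2=\tfrac{a}{a+b}(a+b)x_1+\tfrac{b}{a+b}(a+b)x_2\) exhibits every point of \((aX)\oplus(bX)\) as a convex combination of points of \((a+b)X\); the two sets are thus sandwiched between each other's convex hulls and so share the same upper hull and vertex set. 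You instead apply \(\cR\) to both sides (legitimate via \cref{basic_corr} and \cref{matrix_otimes_points}, with non-negativity needed exactly where you say), observe that both sides represent the same CPA function, and appeal to uniqueness of the minimal max-affine representation to recover equality of \(\U^*\). Both arguments are sound. The paper's is more elementary and self-contained; yours needs the auxiliary uniqueness lemma, which the paper never states but which is essentially the \(k=d\) case of \cref{scary} (vertices of \(\U(S)\) are precisely the affine pieces that are strict maximizers on full-dimensional cells of the tessellation), so the ``delicate point'' you flag is real but closable exactly as you sketch. In return, your route treats the first and third identities uniformly and makes explicit the semantic content of the proposition --- that these operations are well-defined on the represented functions rather than on the point sets.
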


This seems useful, but quite restrictive, since we need to operate with non-negative matrices. However, every matrix \(A\) can be written as a difference between its positive part and its negative part \(A = A^+ - A^-\), where both \(A^+\) and \(A^-\) are non-negative.

We also have an interpretation for the matrix multiplication, similar to \cref{basic_corr}.
Here, when passing a vector of sets of points to the operator \(\cR\),
we apply it coordinate-wise getting a vector of maximums of affine functions.

\begin{proposition}\label[proposition]{matrix_otimes_points}
    Given a vector \(X\) of sets of points in \(\mathbf{D}\)
    and a non-negative matrix \(A\), we have
    \[A\: \cR(X) = \cR(A\otimes X).\]
\end{proposition}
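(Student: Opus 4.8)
The plan is to prove the claimed vector identity coordinate by coordinate, reducing it to two building blocks: the behavior of \(\cR\) under Minkowski sums, which is already supplied by \cref{basic_corr}, and its behavior under \emph{non-negative} scalar multiplication, which I will establish by hand. Fixing a row index \(i\), the \(i\)-th coordinate of the right-hand side is \(\cR\big((A\otimes X)_i\big) = \cR\big(\bigoplus_{j} A_{ij}\cdot X_j\big)\), since \(\cR\) is applied coordinate-wise, while the \(i\)-th coordinate of \(A\,\cR(X)\) is the ordinary linear combination \(\sum_{j} A_{ij}\,\cR(X_j)\). So it suffices to show these two expressions agree for each \(i\).

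First I would record the scalar lemma: for any finite \(S \subset \mathbf{D}\) and any \(\lambda \ge 0\) one has \(\cR(\lambda\cdot S) = \lambda\,\cR(S)\). Unwinding definitions, a point \(\vec{c} = (\vec{a}, b)\) scales to \(\lambda\vec{c} = (\lambda\vec{a}, \lambda b)\), whose dual affine function is \(\x \mapsto \lambda\vec{a}^\intercal\x + \lambda b = \lambda\,\cR(\vec{c})(\x)\). Taking the maximum over \(\vec{c}\in S\) and pulling the factor \(\lambda\) outside the \(\max\) — legitimate precisely because \(\lambda\) is non-negative, so scaling is monotone — gives \(\max_{\vec{c}\in S} \lambda\,\cR(\vec{c}) = \lambda\max_{\vec{c}\in S}\cR(\vec{c})\), i.e. \(\cR(\lambda\cdot S) = \lambda\,\cR(S)\).

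Next I would iterate the Minkowski-sum clause of \cref{basic_corr}, namely \(\cR(U\oplus V) = \cR(U) + \cR(V)\), by a trivial induction on the number of summands to obtain \(\cR\big(\bigoplus_j S_j\big) = \sum_j \cR(S_j)\). Applying this with \(S_j = A_{ij}\cdot X_j\) and then invoking the scalar lemma term by term yields
\[
\cR\Big(\bigoplus_{j=1}^{n} A_{ij}\cdot X_j\Big)
= \sum_{j=1}^{n} \cR\big(A_{ij}\cdot X_j\big)
= \sum_{j=1}^{n} A_{ij}\,\cR(X_j),
\]
which is exactly the \(i\)-th coordinate of \(A\,\cR(X)\). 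As \(i\) was arbitrary, the vector identity follows.

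The only delicate point — and the reason the hypothesis that \(A\) is non-negative cannot be dropped — is the scalar lemma: \(\max\) commutes with multiplication by \(\lambda\) only when \(\lambda \ge 0\); for a negative factor the maximum would turn into a minimum, severing the correspondence. Everything else is bookkeeping layered on the already-established dictionary between set operations in \(\mathbf{D}\) and pointwise operations on CPA functions, so I anticipate no further obstacle.
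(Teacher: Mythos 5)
Your proposal is correct and follows essentially the same route as the paper's own proof: both argue coordinate-wise, reduce the $i$-th entry to $\sum_j A_{ij}\,\cR(X_j)$, and combine the Minkowski-sum clause of Proposition~\ref{basic_corr} with the fact that $\cR$ commutes with non-negative scalar multiplication. The only difference is that you spell out and justify the scalar step (including why non-negativity is needed), which the paper's one-line chain of equalities leaves implicit.
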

\begin{proof}
    \[[A\:\cR(X)]_i = \bigoplus_j A_{ij}[\cR(X)]_j = \bigoplus_j [\cR(A_{ij}X_j)] \] \[ = \cR(\oplus_j A_{ij}X_j) = \cR([A\otimes X]_i) = [\cR(A\otimes X)]_i\]
\end{proof}
We can now characterise the function \({F_l = \cR(P_l) - \cR(N_l)}\)
in terms of vectors of points \(P_{l-1}\) and \(N_{l-1}\).
\begin{proposition}\label[proposition]{explicit}
    Let's assume that
    \(F_l = \sigma(A_l\: F_{l-1})\) and \(F_{l-1} = \cR(P_{l-1})-\cR(N_{l-1})\).
    Then, after writing \({A_l = A_l^+ - A_l^-}\),
    we get \(F_l = \cR(P_l) - \cR(N_l)\) for
    \[N_l = (A_l^-\otimes P_{l-1}) \oplus (A_l^+\otimes N_{l-1})\]
    \[\textrm{and} \quad P_l =(A_l^+\otimes P_{l-1})\oplus(A_l^-\otimes N_{l-1}) \:\cup\: N_l.\]
\end{proposition}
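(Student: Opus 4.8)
The plan is to unwind the definition $F_l = \sigma(A_l F_{l-1})$ purely algebraically, reducing everything to the two correspondences already established --- \cref{matrix_otimes_points} for matrix products and \cref{basic_corr} for Minkowski sums and unions --- and then to dispose of the ReLU by a single translation identity. Throughout, all manipulations are at the level of the functions $\cR(\cdot)$ (applied coordinate-wise to the vectors $P_{l-1}, N_{l-1}$), so the algebra proceeds componentwise exactly as in the scalar case.

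First I would expand $A_l F_{l-1}$. Substituting $A_l = A_l^+ - A_l^-$ and $F_{l-1} = \cR(P_{l-1}) - \cR(N_{l-1})$ and multiplying out yields four terms, each of the form (nonnegative matrix)$\,\times\,\cR(\cdot)$. Since $A_l^+$ and $A_l^-$ are nonnegative, \cref{matrix_otimes_points} rewrites each as $\cR(A^{\pm}\otimes\,\cdot\,)$. Collecting the terms entering with a plus sign separately from those with a minus sign, I obtain
\[A_l F_{l-1} = \bigl[\cR(A_l^+ \otimes P_{l-1}) + \cR(A_l^- \otimes N_{l-1})\bigr] - \bigl[\cR(A_l^- \otimes P_{l-1}) + \cR(A_l^+ \otimes N_{l-1})\bigr].\]
Applying the sum-correspondence $\cR(X \oplus Y) = \cR(X) + \cR(Y)$ from \cref{basic_corr} to each bracket, the subtracted bracket is exactly $\cR(N_l)$ for the $N_l$ in the statement, and the added bracket is $\cR(Q)$ with $Q = (A_l^+ \otimes P_{l-1}) \oplus (A_l^- \otimes N_{l-1})$. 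Hence $A_l F_{l-1} = \cR(Q) - \cR(N_l)$, already a DCPA function.

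Next I would absorb the activation. Writing $\sigma(t) = \max\{0, t\}$ and using the translation identity $\max\{0, t\} = \max\{s, s+t\} - s$ with $s = \cR(N_l)$, I get
\[F_l = \max\{\cR(N_l), \cR(Q)\} - \cR(N_l).\]
The union-correspondence $\cR(X \cup Y) = \max\{\cR(X), \cR(Y)\}$ from \cref{basic_corr} then rewrites the first term as $\cR(Q \cup N_l)$, giving $F_l = \cR(P_l) - \cR(N_l)$ with $P_l = Q \cup N_l$, which is precisely the claimed pair of formulas.

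The argument is essentially mechanical once the correspondences are in hand; the only genuine choice is to handle the ReLU by translating the inner $\max\{0,\cdot\}$ by $\cR(N_l)$, so that both arguments of the maximum become CPA functions realizable as $\cR$ of a point set --- this is exactly what makes the union-correspondence applicable and what explains why $N_l$ reappears inside the definition of $P_l$. I would also remark that since every equality above is an equality of functions through $\cR$, there is no need to worry that the point sets produced by $\oplus$ and $\cup$ carry redundant (non-vertex) points: only their images under $\cR$ enter, and \cref{basic_corr,matrix_otimes_points} are stated at exactly that level.
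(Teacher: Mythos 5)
Your proposal is correct and follows essentially the same route as the paper's own proof: expand $(A_l^+ - A_l^-)(\cR(P_{l-1}) - \cR(N_{l-1}))$ into positive and negative brackets, convert via the matrix-product and Minkowski-sum correspondences, and then absorb the ReLU with the identity $\max\{x-y,0\} = \max\{x,y\} - y$ (your translation by $s = \cR(N_l)$ is the same step), finishing with the union correspondence. Your closing remark about redundant non-vertex points is a sensible addition but not needed, since all identities are indeed equalities of functions under $\cR$.
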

Proposition \ref{explicit} is the key to our counting algorithm. Given a neural network, we apply it to all the layers successively, and in the end we obtain a representation of the NN as a DCPA function. Having a DCPA form, we can use proposition \ref{new-hard} and \ref{corr_affine_pc_count} to count the number of boundary and affine pieces.

\subsection{Tropical Hypersurfaces}\label[section]{duals}

In this section, we explore the regions into which a CPA function partitions the plane, which is called the \emph{tessellation} of a CPA. We define it formally below.

\begin{definition}
    Given a CPA
    \[F(\x) = \max\{f_1(\x), \ldots, f_n(\x)\}\]
    where \(f_i\) are affine functions, an \emph{affine region of \(F\)} is
    \[\Big\{\x \in \R^d \: \Big| f_i(\x) = f_{i'}(\x) > f_j(\x) \textrm{ for all } i, i' \in I,  j \in J \Big\},\] 
    where \(I, J\) are disjoint sets whose union is \(\{1, \dots, n\}\).
    Its \emph{dimension} is the smallest dimension
    of an affine subspace of \(\R^{d}\) containing it.
    The set of all regions of dimension \(k\) (\(k\)-cells)
    will be denoted as \(\T_k(F)\), and \(\T(F) = \bigcup_k \T_k(F)\).
\end{definition}
For a set of points \(S\) in the dual space we will denote by \(\T(S)\)
the tessellation of \(\cR(S)\).
For example, \(\T_0\) is the set of all vertices of \(\T(S)\),
\(\T_1\) is the set of all its lines, rays and segments.

\begin{proposition}\label[proposition]{scary}
    \(k\)-cells of \(\T(S)\) are in one-to-one correspondence with \((d-k)\)-cells of \(\U(S)\).
    Each \(k\)-cell \(\sigma\) of \(\T(S)\) is of the form
    \[p(\cR(\text{dual planes tangent to }\U(S)\text{ containing }\sigma')),\]
    where \(\sigma'\) is a \((d-k)\)-cell of \(\U(S)\),
    and \(p : \R^d \times \R \rightarrow \R^d\) is the projection onto first \(d\) coordinates.
\end{proposition}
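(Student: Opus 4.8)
The plan is to realise both tessellation cells and upper-hull faces as dual objects of a single support-function computation, and then read off the bijection, the dimension shift, and the stated formula.

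First I would rewrite the CPA function in support-function form. For $\x \in \R^d$ and a dual point $\vec{c} = (\vec{a}, b)$ we have $\cR(\vec{c})(\x) = \vec{a}^\intercal \x + b = \langle (\x, 1), \vec{c}\rangle$, so
\[
\cR(S)(\x) = \max_{\vec{c} \in S} \langle (\x, 1), \vec{c}\rangle,
\]
which is the support function of $\mathcal C(S)$ evaluated at the direction $(\x, 1)$. Because this direction has positive last coordinate, the maximising face $\phi(\x) := \{\vec{c} \in \mathcal C(S) : \langle(\x,1),\vec{c}\rangle = \cR(S)(\x)\}$ is a face of the upper hull $\U(S)$. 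By \cref{max-hull} we may discard the points of $S$ strictly below $\U(S)$, so without loss of generality $S = \U^*(S)$; then the active affine functions at $\x$ are exactly those dual to vertices of $\phi(\x)$. Consequently two points lie in the same (relatively open) affine region of $\cR(S)$ iff they share the same maximising face $\phi$, which already produces a well-defined injection from cells of $\T(S)$ to faces of $\U(S)$.

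Next I would show this map is a dimension-reversing bijection via normal cones. For a face $\sigma'$ of $\U(S)$ let $N(\sigma') = \{\vec{u} : \langle \vec{u}, \cdot\rangle \text{ is maximised over } \mathcal C(S) \text{ on a face } \supseteq \sigma'\}$ be its normal cone. Since $\sigma'$ is an upper face, $N(\sigma')$ contains directions with positive last coordinate, so its slice $\{\vec{u} \in N(\sigma') : u_{d+1} = 1\}$ is nonempty and, identifying $\vec{u} = (\x, 1)$ with $\x$, projects under $p$ to the set $\sigma := \{\x : \phi(\x) \supseteq \sigma'\}$. I would check that $\sigma$ is precisely the closed affine region whose relative interior consists of the $\x$ with $\phi(\x) = \sigma'$; this makes $\sigma' \mapsto \sigma$ a bijection between all faces of $\U(S)$ and all cells of $\T(S)$, since every upper face is realised (pick $\vec{u}$ in the relative interior of $N(\sigma')$) and every cell recovers its face from an interior point. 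For the dimensions, $\dim N(\sigma') = (d+1) - \dim \sigma'$ (assuming $\mathcal C(S)$ full-dimensional, otherwise argue with relative interiors); intersecting with the affine hyperplane $\{u_{d+1}=1\}$ drops the dimension by one and $p$ restricts to a bijection on that slice, so $\dim \sigma = d - \dim\sigma'$. This is exactly the claim that a $k$-cell of $\T(S)$ corresponds to a $(d-k)$-cell of $\U(S)$.

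Finally I would translate the normal-cone description into the stated duality formula. A direct computation shows the supporting hyperplane of $\U(S)$ touching $\phi(\x)$ is $\{(\vec{a},b) : \vec{a}^\intercal\x + b = \cR(S)(\x)\} = \cR^{-1}(\x, \cR(S)(\x))$, i.e. the dual hyperplane associated to the real point $(\x, \cR(S)(\x)) \in \mathbf R$. Hence the dual planes tangent to $\U(S)$ and containing $\sigma'$ are exactly $\{\cR^{-1}(\x, \cR(S)(\x)) : \x \in \sigma\}$; applying $\cR$ returns the real points $(\x, \cR(S)(\x))$ and then $p$ strips the last coordinate, recovering $\sigma$. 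This gives $\sigma = p(\cR(\text{dual planes tangent to }\U(S)\text{ containing }\sigma'))$, as stated. The main obstacle I anticipate is the dimension bookkeeping: making the ``slice at height $1$'' argument rigorous---ensuring that only genuine upper faces are picked up, that the slice is nonempty of the right dimension, and that degenerate (non-full-dimensional) $\mathcal C(S)$ and the open-versus-closed-cell convention are handled---rather than the conceptual duality, which is clean.
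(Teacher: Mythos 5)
Your argument is correct and lands on the same correspondence, but it travels a genuinely different route from the paper's. The paper's proof is a two-line direct dualization: it writes the defining system of a $k$-cell, $f_{i_0}(\x)=\dots=f_{i_{d-k}}(\x)\ge f_j(\x)$, as incidence and above/below relations between the real point $(\x, F(\x))$ and the hyperplanes $f_i$, and then applies the incidence-preserving and order-reversing properties of \cref{duality} to conclude that the dual hyperplane $\cR^{-1}\big((\x,F(\x))\big)$ is tangent to $\U(S)$ and contains the vertices $\cR^{-1}(f_{i_0}),\dots,\cR^{-1}(f_{i_{d-k}})$. You instead recognise $\cR(S)(\x)=\max_{\vec{c}\in S}\langle(\x,1),\vec{c}\rangle$ as the support function of $\mathcal C(S)$ and identify $\T(S)$ with the slice of the normal fan of $\mathcal C(S)$ at height $u_{d+1}=1$; since $\vec{a}^\intercal\x+b=\langle(\x,1),(\vec{a},b)\rangle$ is exactly the pairing underlying $\cR$, the tangency and incidence content of the two proofs is identical. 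What your version buys is a more careful treatment of the two points the paper leaves implicit: bijectivity of the correspondence on nonempty cells, and the dimension shift $\dim\sigma=d-\dim\sigma'$. The paper's indexing tacitly assumes a $k$-cell is cut out by exactly $d-k+1$ active functions, i.e.\ that the corresponding hull face is spanned by that many of the given points, whereas your normal-cone count $\dim N(\sigma')=(d+1)-\dim\sigma'$ followed by the height-one slice needs no such genericity. The bookkeeping you flag at the end (non-full-dimensional $\mathcal C(S)$, the open-versus-closed cell convention, and points of $S$ lying on but not vertices of $\U(S)$) is routine and does not conceal a gap.
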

By \(H\) being \emph{tangent} we mean that the whole of \(\U(S)\) lies \underline{under or on} \(H\)
and that \(H \cap \U(S) \ne \emptyset\).

%%%%%%%%%%%%

\subsection{Decision boundary}
 Let \(F = \cR(P)\) and \(G = \cR(N)\) be CPA functions \(\R^d \to \R\). We are interested in being able to describe the zero set \(D\) of a DCPA function \(F-G\). 
The proposition below expands on the idea of Proposition 6.1 in \citep{zhang2018tropical}.

\begin{proposition}\label[proposition]{new-easy}
    Let us assume that no points of \(P\) lie on \(\U(N)\) and vice versa. 
    The set \(D\) is a union of precisely these \({(d-1)}\)-dimensional cells of \({\T(P\cup N)}\) which correspond to the edges of \(\U(P\cup N)\) with one end in \(P\) and the other end in \(N\).
\end{proposition}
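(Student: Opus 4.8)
The plan is to reduce the statement about the zero set \(D = \{\x : F(\x) = G(\x)\}\) to a combinatorial condition on the faces of the single upper hull \(\U(P \cup N)\), and then read off which faces contribute. The starting point is that \(F = \cR(P)\) and \(G = \cR(N)\) satisfy \(\max\{F, G\} = \cR(P \cup N)\) by \cref{basic_corr}. From this I would extract the pointwise criterion: \(\x \in D\) if and only if the maximum \(\cR(P\cup N)(\x)\) is attained simultaneously by at least one function dual to a point of \(P\) and at least one dual to a point of \(N\). Indeed, if \(F(\x) = G(\x)\) then this common value equals the overall maximum and is realized on both sides; conversely, if some \(\vec{p} \in P\) and \(\vec{n} \in N\) both realize the maximum \(M = \cR(P\cup N)(\x)\), then \(M \ge F(\x) \ge \cR(\vec{p})(\x) = M\) forces \(F(\x) = M\), and likewise \(G(\x) = M\), so \(\x \in D\).

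Next I would make this criterion geometric using \cref{scary}. On the relative interior of each cell \(\sigma\) of \(\T(P\cup N)\), the affine functions attaining \(\cR(P\cup N)\) are exactly those dual to the vertices of the corresponding face \(\sigma'\) of \(\U(P\cup N)\). Hence \(\mathrm{relint}(\sigma)\) lies in \(D\) precisely when \(\sigma'\) carries both a vertex of \(P\) and a vertex of \(N\) --- call such a cell \emph{mixed} --- and is disjoint from \(D\) otherwise. Since \(D\) is closed and the cells partition \(\R^d\), it follows that \(D = \bigcup\{\overline{\sigma} : \sigma \text{ mixed}\}\); here I would use the dimension-reversing fact that passing to a lower-dimensional face \(\tau\) of \(\sigma\) in \(\T(P\cup N)\) only enlarges the active set, hence enlarges the dual face, so every face of a mixed cell is again mixed and \(\overline{\sigma} \subseteq D\).

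It then remains to show that this union is already exhausted by the \((d-1)\)-cells coming from mixed \emph{edges} of \(\U(P\cup N)\). A \(d\)-cell is dual to a single vertex and so is never mixed, so \(D\) contains no \(d\)-cell and is not full-dimensional; a \((d-1)\)-cell is dual to an edge and is mixed iff that edge joins a point of \(P\) to a point of \(N\). For the remaining mixed cells of dimension \(\le d-2\), dual to faces \(\sigma'\) of dimension \(\ge 2\), I would argue that \(\sigma'\) always contains a mixed edge on its boundary: the \(1\)-skeleton of the polytope \(\sigma'\) is connected, and a connected graph whose vertices are two-colored by \(P\) and \(N\) with both colors present must contain a bichromatic edge. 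Since such an edge \(e'\) is a face of \(\sigma'\), the dimension-reversing correspondence makes the dual cell \(\sigma\) a face of the \((d-1)\)-cell \(\sigma_{e}\) dual to \(e'\), whence \(\overline{\sigma} \subseteq \overline{\sigma_{e}}\). Collecting these, \(D\) equals the union of the \((d-1)\)-cells of \(\T(P\cup N)\) dual to edges of \(\U(P\cup N)\) with one end in \(P\) and the other in \(N\).

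Finally, the non-degeneracy hypothesis --- no point of \(P\) lies on \(\U(N)\) and vice versa --- is what makes the vertex labeling of \(\U(P\cup N)\) well-defined and the picture codimension one. If some \(\vec{v}\) belonged to both \(P\) and \(N\) as a hull vertex, then \(F = G = \cR(\vec{v})\) on the entire \(d\)-cell dual to \(\vec{v}\), producing a full-dimensional component of \(D\) and rendering ``one end in \(P\), the other in \(N\)'' ambiguous; the hypothesis forbids exactly this, together with the tangential case in which a point of \(P\) meets \(G\) along a positive-dimensional face. I expect the main obstacle to be the careful bookkeeping of the cell--face correspondence under \cref{scary} --- in particular verifying that taking faces in \(\T\) corresponds to enlarging faces in \(\U\), and confirming that every mixed higher-dimensional face genuinely has a mixed boundary edge --- rather than the pointwise reduction, which is elementary.
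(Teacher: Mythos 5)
Your proof is correct and takes essentially the same route as the paper's: a cell of \(\T(P\cup N)\) lies in \(D\) precisely when the active set in its defining system contains a function from \(\cR(P)\) and one from \(\cR(N)\), i.e.\ when the dual face of \(\U(P\cup N)\) carries vertices of both \(P\) and \(N\). You additionally spell out the closure bookkeeping and the bichromatic-edge argument on the connected \(1\)-skeleton that reduces all mixed cells to those dual to mixed edges --- a step the paper's one-line proof leaves implicit.
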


This means that to draw the decision boundary, all we have to do is draw the hypersurface \(\T(P\cup N)\) and identify which cells come from the intersection of the graphs of \(\cR(P)\) and \(\cR(N)\).

\Cref{new-easy} deals with the case most likely to happen in general situations, but it is possible that some points of \(P\) lie on \(\U(N)\) or vice versa. \Cref{new-hard} describes this more difficult case too. We compute the boundary count of a neural network by applying \ref{new-hard} to the DCPA representation of a NN (from proposition \ref{explicit}).

\begin{proposition}\label[proposition]{new-hard}
    Let \(F = \cR(P), G = \cR(N)\) be CPA functions. Then the zero set \(D = \{\x \in \R^d \: |\: F(\x) = G(\x)\}\) consists precisely of this cells of \(\T(P \cup N)\), which correspond to the cells of \(\U(P\cup N)\) containing points from both \(P\) and \(N\).
\end{proposition}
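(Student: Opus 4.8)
The plan is to reduce the statement to a pointwise membership criterion for $D$ and then translate that criterion through the duality of \cref{scary}. First I would use \cref{basic_corr} to write $\cR(P\cup N)=\max\{\cR(P),\cR(N)\}=\max\{F,G\}$, and abbreviate $M:=\cR(P\cup N)$ for this maximum. Since $M=\max\{F,G\}$, at every point $\x$ at least one of $F(\x),G(\x)$ already equals $M(\x)$, so $F(\x)=G(\x)$ holds if and only if $F(\x)=M(\x)$ and $G(\x)=M(\x)$ hold simultaneously. Because $F=\cR(P)$ and $G=\cR(N)$ are themselves maxima of affine functions, $F(\x)=M(\x)$ is equivalent to some point $\vec p\in P$ attaining the overall maximum $\cR(\vec p)(\x)=M(\x)$, and likewise for $G$ and $N$. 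Thus $\x\in D$ precisely when the maximum defining $M$ is attained both by an affine function dual to a point of $P$ and by one dual to a point of $N$.

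The second step is to identify, cell by cell, which points of $P\cup N$ attain $M$. Fix a cell $\sigma$ of $\T(P\cup N)$; by \cref{scary} it corresponds to a face $\sigma'$ of $\U(P\cup N)$, and the affine functions tied at the maximum throughout $\sigma$ are exactly those whose dual points lie on $\sigma'$. \Cref{max-hull} shows this is an honest restriction: any point of $P\cup N$ lying strictly below $\U(P\cup N)$ produces an affine function dominated by the maximum, hence never attains $M$ and may be discarded. So the set of maximizers on $\sigma$ equals the set of points of $P\cup N$ contained in $\sigma'$. Combining with the pointwise criterion, every $\x\in\sigma$ lies in $D$ if and only if $\sigma'$ contains at least one point of $P$ and at least one point of $N$. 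Because the tied set is constant on $\sigma$ by the very definition of an affine region, membership in $D$ is constant on each cell, and $D$ is exactly the union of those cells of $\T(P\cup N)$ whose dual hull faces meet both $P$ and $N$.

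The main obstacle is precisely the degenerate configuration that \cref{new-easy} excluded: a point of $P$ may lie on $\U(N)$, or more generally in the relative interior of a face of $\U(P\cup N)$, without being a vertex of the hull. I must therefore be careful that ``contains points from both $P$ and $N$'' is read as geometric containment of the original points of $P$ and $N$ in the face $\sigma'$—including points in its relative interior—rather than mere incidence of hull \emph{vertices}; \cref{max-hull} is exactly what licenses counting such interior points as genuine maximizers while still discarding points below the hull. Once this reading is fixed, the cell-by-cell equivalence yields the claimed description of $D$ in full generality, with no assumption on the relative position of $P$ and $N$, and recovers \cref{new-easy} as the special case where every relevant face is an edge with one endpoint in each set.
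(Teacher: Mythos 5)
Your proof is correct and follows essentially the same route as the paper's (much terser) argument: both reduce $F(\x)=G(\x)$ to the condition that $\max\{F,G\}$ coincides simultaneously with a piece of $F$ and a piece of $G$, and then dualize via the cell correspondence of \cref{scary}. Your explicit treatment of points of $P$ or $N$ lying in the relative interior of a face of $\U(P\cup N)$ — justified through \cref{max-hull} — is exactly the degenerate case that distinguishes this proposition from \cref{new-easy}, and which the paper's proof handles only implicitly through its parenthetical remark about set-wise containment.
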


\subsection{Affine pieces}

Our formalism also allows us to count the exact total number of affine pieces. To do this for a neural network, we apply the corollary \ref{corr_affine_pc_count} to the DCPA form obtained from proposition \ref{explicit}.

\begin{corollary}\label{corr_affine_pc_count}
    The number of affine pieces (\(d\)-cells) of a DCPA function \(\cR(P) - \cR(N)\) is equal to the number of vertices of \(\U(P \oplus N)\).
\end{corollary}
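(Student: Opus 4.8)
The plan is to reduce the difference $\cR(P) - \cR(N)$ to the sum $\cR(P) + \cR(N)$, which by \cref{basic_corr} is itself a single CPA function $\cR(P \oplus N)$, and then to read off its $d$-cells directly from the upper-hull correspondence of \cref{scary}. First I would observe that a DCPA function is affine on a region exactly when both of its constituent CPA functions are affine there. Since negating a CPA function does not move its breakpoints, the partition of $\R^d$ into maximal regions of affinity is identical for $\cR(P) - \cR(N)$ and for $\cR(P) + \cR(N)$; both are the common refinement of the tessellations of $\cR(P)$ and $\cR(N)$. Hence the $d$-cells of the DCPA function $\cR(P) - \cR(N)$ are exactly the $d$-cells of the tessellation of $\cR(P) + \cR(N)$.

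Next, by the second part of \cref{basic_corr} we have $\cR(P) + \cR(N) = \cR(P \oplus N)$, so the sum is the CPA function dual to the single point set $P \oplus N$, and its tessellation is $\T(P \oplus N)$. I would then invoke \cref{scary} with $k = d$, which furnishes a one-to-one correspondence between the $d$-cells of $\T(P \oplus N)$ and the $0$-cells of $\U(P \oplus N)$, namely the vertices $\U^*(P \oplus N)$. Concretely, the $d$-cell attached to a vertex $\vec{p} + \vec{n} \in \U^*(P \oplus N)$ is the region on which $\cR(\vec{p} + \vec{n}) = \cR(\vec{p}) + \cR(\vec{n})$ realises the maximum; there $\cR(P) = \cR(\vec{p})$ and $\cR(N) = \cR(\vec{n})$ simultaneously, which pins down the affine form $\cR(P) - \cR(N) = \cR(\vec{p}) - \cR(\vec{n})$ on that cell. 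Counting then gives that the number of $d$-cells equals $|\U^*(P \oplus N)|$, the number of vertices of $\U(P \oplus N)$.

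The main obstacle is turning the first reduction into an honest bijection. One must confirm that the tessellation really is the common refinement of $\T(P)$ and $\T(N)$ for the difference with no spurious cell merging: adjacent refinement cells carrying affine pairs $(\vec{p}_1, \vec{n}_1)$ and $(\vec{p}_2, \vec{n}_2)$ could in degenerate configurations satisfy $\vec{p}_1 - \vec{n}_1 = \vec{p}_2 - \vec{n}_2$ and fuse under the subtraction even though their associated hull vertices are distinct. I would resolve this by taking the $d$-cells of $\T(P \oplus N)$ as the operative notion of affine piece (the combinatorial tessellation), so that the count is intrinsically the number of hull vertices; alternatively, one argues in general position that each maximising vertex of the upper hull selects a distinct ordered pair $(\vec{p}, \vec{n})$ and hence a distinct affine form, ruling out fusion. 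The second reduction is essentially free, since \cref{scary} at $k = d$ already supplies the bijection between full-dimensional cells and hull vertices; here one need only note that a $d$-cell corresponds to a $0$-cell (a single vertex of $\U(P \oplus N)$) rather than to a higher-dimensional face.
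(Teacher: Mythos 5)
Your argument is correct, and it reaches the corollary by a genuinely different route from the paper's. The paper obtains the corollary as the $k=d$ instance of Proposition~\ref{DCPL_affine_pieces}, which it proves directly: a cell of the difference is encoded by a \emph{pair} of parallel dual hyperplanes tangent to $\U(P)$ and $\U(N)$, their Minkowski sum is shown to be a tangent hyperplane of $\U(P\oplus N)$ containing the corresponding cell, and an explicit inverse construction gives the bijection. You instead replace the difference $\cR(P)-\cR(N)$ by the sum $\cR(P)+\cR(N)=\cR(P\oplus N)$ via Proposition~\ref{basic_corr}, observe that both have the same combinatorial partition into regions of affinity (the common refinement of $\T(P)$ and $\T(N)$, since a maximizing pair for the sum is exactly a pair of maximizers for each summand), and then quote the already-established cell/face duality of Proposition~\ref{scary} at $k=d$. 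Your route is shorter, reuses existing machinery, and makes it conceptually transparent why the Minkowski sum appears; the paper's route buys the statement for cells of every dimension $k$ together with the explicit description of each cell as a projection of duals of tangent hyperplanes. The degeneracy you flag (adjacent refinement cells carrying the same affine form of the difference and fusing) is resolved exactly as you propose: the paper's notion of affine region is combinatorial (see its definition via index sets $I,J$ and the closing remark about activation patterns), so no fusion is counted. The one residual degeneracy --- two distinct pairs $(\vec{p}_1,\vec{n}_1)\neq(\vec{p}_2,\vec{n}_2)$ summing to the same point of $P\oplus N$, which would make vertices of $\U(P\oplus N)$ undercount pairs --- is glossed over equally by your argument and by the paper's, so it is not a gap relative to the paper's standard of rigor.
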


Corollary~\ref{corr_affine_pc_count} is a special case of a more general result stated below.

\begin{proposition}\label{DCPL_affine_pieces}
    Each \(k\)-cell \(\sigma\) of \(\cR(P) - \cR(N)\) is of the form \[\sigma \! = \! p(\cR(\text{hyperplanes tangent to }\U(P \oplus N)\text{ containing }\sigma'))\] where \(\sigma'\) is a \((d-k)\)-cell of \(\U(P \oplus N)\). The correspondence \(\sigma \leftrightarrow \sigma'\) is bijective.
\end{proposition}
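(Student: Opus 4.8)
The plan is to reduce the statement to \cref{scary} applied to the single point set \(P \oplus N\), exploiting the fact that Minkowski sum dualizes to addition. By \cref{basic_corr} we have \(\cR(P \oplus N) = \cR(P) + \cR(N)\), and this is a genuine CPA function, since a sum of two convex functions is convex. Thus \cref{scary} already tells us that the \(k\)-cells of its tessellation \(\T(P \oplus N)\) are in bijection with the \((d-k)\)-cells \(\sigma'\) of \(\U(P \oplus N)\), through exactly the formula \(\sigma = p(\cR(\text{tangent hyperplanes containing }\sigma'))\) that appears in the statement. Hence the only thing left to establish is that the cell decomposition of the DCPA \(\cR(P) - \cR(N)\) coincides, as a subdivision of \(\R^d\), with the tessellation \(\T(P \oplus N) = \T(\cR(P) + \cR(N))\); everything else is then inherited verbatim from \cref{scary}.

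The heart of the argument is therefore an identification of two subdivisions. By a cell of the DCPA \(\cR(P) - \cR(N)\) I mean a maximal region on which the tuple recording which affine pieces of \(\cR(P)\) and of \(\cR(N)\) are active stays constant, i.e.\ the common refinement of the tessellations \(\T(P)\) and \(\T(N)\). First I would note that both \(\cR(P) + \cR(N)\) and \(\cR(P) - \cR(N)\) are affine on every cell of this common refinement, so both subdivisions are coarsenings of it. The key point is that for the \emph{sum} no coarsening occurs: across any facet separating two adjacent refinement cells, each of the convex functions \(\cR(P), \cR(N)\) either stays affine or kinks, and, being convex, kinks \emph{upward} with the same orientation relative to the shared hyperplane. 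The two kink contributions are therefore positive multiples of the same linear form vanishing on that facet, so they add with a strictly positive coefficient and can never cancel; consequently \(\cR(P) + \cR(N)\) genuinely breaks across every such facet. This shows the maximal affine regions of \(\cR(P) + \cR(N)\) are exactly the common-refinement cells, so \(\T(P \oplus N)\) equals the DCPA cell decomposition.

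Combining the two steps yields the proposition: the \(k\)-cells of \(\cR(P) - \cR(N)\) equal the \(k\)-cells of \(\T(P \oplus N)\), which by \cref{scary} are in bijection with the \((d-k)\)-cells of \(\U(P \oplus N)\) via the stated projection-of-tangent-hyperplanes formula. The hard part will be the second paragraph, namely pinning down precisely what a cell of a DCPA function is and proving the no-cancellation claim cleanly. This care is genuinely necessary: for the \emph{difference} the two kinks may cancel, for instance when \(\cR(P) = \cR(N)\) the difference is globally affine while \(\U(P \oplus N)\) still has several vertices, so the cells must be read off the combinatorial active-piece data rather than from the maximal affine regions of the function value. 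Finally, Corollary~\ref{corr_affine_pc_count} is recovered as the case \(k = d\), where the \((d-k) = 0\)-cells of \(\U(P \oplus N)\) are precisely its vertices.
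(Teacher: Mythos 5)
Your proposal is correct, but it takes a genuinely different route from the paper. The paper proves the proposition directly: it writes out the defining equalities and inequalities of a $k$-cell $\sigma$ (with $a+1$ active pieces of $\cR(P)$ taking value $s$ and $b+1$ active pieces of $\cR(N)$ taking value $t$), dualizes to obtain two parallel hyperplanes $\cR^{-1}((\x,s))$ and $\cR^{-1}((\x,t))$ tangent to $\U(P)$ and $\U(N)$, and observes that their Minkowski sum $\cR^{-1}((\x,s+t))$ is tangent to $\U(P\oplus N)$ and contains the cell on the vertices $\cR^{-1}(f_{i_\alpha})+\cR^{-1}(g_{j_\beta})$; it then argues the converse and bijectivity by hand. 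You instead reduce to \cref{scary} applied to $P\oplus N$, which is legitimate since $\cR(P\oplus N)=\cR(P)+\cR(N)$ is a CPA, and the only thing to check is that the DCPA's cell decomposition equals $\T(P\oplus N)$. That identification is in fact even more immediate than your kink argument suggests: with the paper's combinatorial definition of an affine region (via the set of maximizing indices), the maximizers of $\{f_i+g_j\}$ at $\x$ are exactly $I(\x)\times J(\x)$ where $I(\x),J(\x)$ are the maximizers of the $f$'s and $g$'s separately, so $\T(P\oplus N)$ is the common refinement of $\T(P)$ and $\T(N)$ by definition; your no-cancellation-of-convex-kinks argument proves the slightly different (also true) statement that the function-value-level maximal affine regions of the sum do not coarsen this refinement. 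Your route is shorter and more modular; the paper's explicit computation additionally yields the sign information recorded in its closing remark (which of the two parallel tangent hyperplanes lies above the other determines the class of the region), which is lost when one passes to the sum. Your observation that the difference, unlike the sum, can suffer cancellation --- so that cells must be read off the activation data rather than the function values --- is exactly the degeneracy the paper flags in its remark about regions with identical affine functions but different activation patterns.
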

To the best of the authors' knowledge, this explicit formula for counting the total number of affine pieces has not been spelled out in existing literature, where the scaling of the count with respect to neural network structures is usually the focus.

\begin{remark}
    In ReLU neural networks it is possible to have a degenerate situation, where on two regions the network computes the same affine function, but these regions differ in activation patterns. Our approach will see such regions as separate. We do not know of any literature where this would be treated differently.
\end{remark}

\section{Numerical Experiments}
In this section, as a proof of concept, we conduct numerical experiments on 2D synthetic data. The aim of this section is two-fold. 
Firstly, we compare the proposed boundary complexity (\#Boundary) to various other complexity measurements, e.g., the total number of affine pieces (\#Total), the sum of weights squared (F-norm), and evaluate their trends during training. 
The results show that our boundary complexity is quite unique, with distinctive features. 
Secondly, we demonstrate a negative correlation between the number of boundary pieces and classification robustness, where popular robust training methods, specifically noise injection and adversarial training, can both diminish the number of boundary pieces.

We choose ReLU neural networks with 2 hidden layers of different widths across all our simulations.
Three training schemes are considered: regular training with cross-entropy (CE), CE with Gaussian noise injection (Noisy), and CE with $l_\infty$-adversarial training by fast gradient sign attacks \citep{goodfellow2014explaining} (Adv). 
Two synthetic datasets are constructed in 2-dimensional space, one is 3-by-3 Gaussian mixture (Figure \ref{fig:gaussian}) and the other is spiral-shaped (Figure \ref{fig:spiral}). The Gaussian case provides a baseline while the spiral case is much more challenging and may better reflect complicated data structures in practice. 
To measure robustness, we choose Gaussian distributed random noise injection with standard deviation $\sigma$. 2000 test points are used to approximate the expectation and this empirical robustness measure is denoted (in percentile) by $R(\sigma)$.

\begin{figure}[ht]
    \centering
    \includegraphics[width=0.48\textwidth
    ]{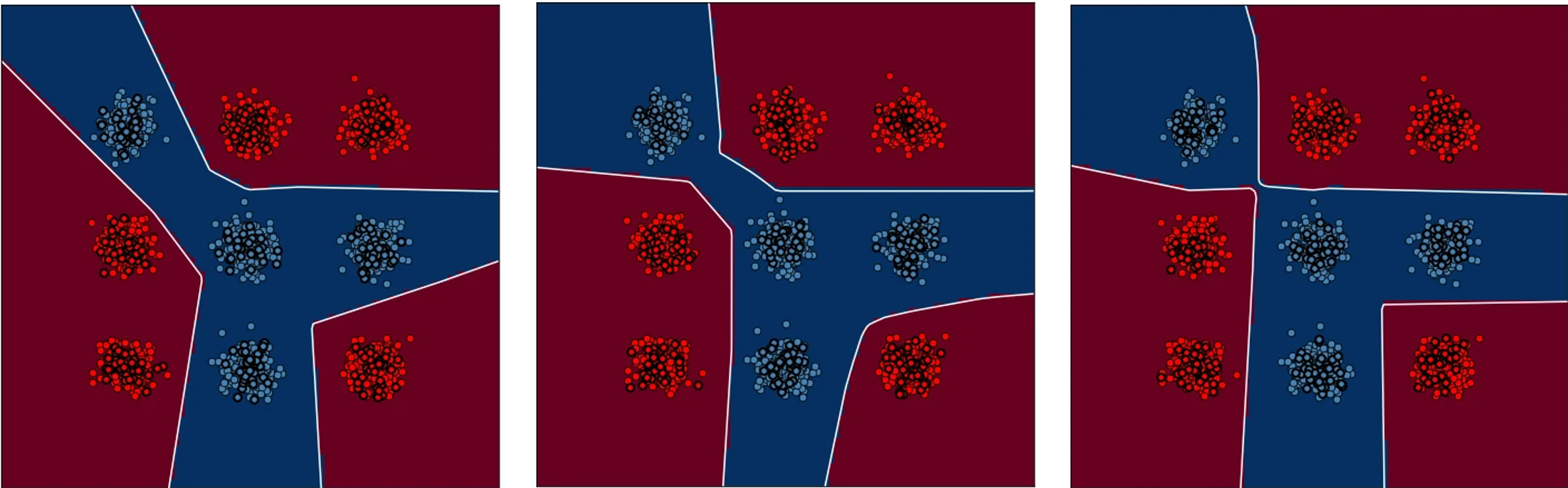}
    \caption{Decision boundaries in the 3$\times 3$ Gaussian mixture case in $[-2, 2]^2$. From left to right are instances of CE (\#Boundary=46), Noisy (\#Boundary=41), Adv (\#Boundary=40), respectively. }
    \label{fig:gaussian}
\end{figure}

The quantities at initialization are shown in Table \ref{tab:robust1} and Table \ref{tab:robust2}. 
We can see that the initial \#Boundary is usually much smaller, with larger variations. This is to be expected as the boundary is only a level set of the initialized classifier, which can be very sensitive to constant shifts.
The initial \#Total is usually larger. This is interesting and indicates that the initial classifier is more random in terms of linear region arrangement.
Like \#Boundary, the F-norm at initialization is much smaller, but with much smaller variations. This is to be expected as the F-norm is directly linked to initialized weights.

\subsection{Trends During Training}
For different tasks, we can observe the overall trend for \#Boundary to be: first increase, then decrease and finally stabilize. 
Similar behaviors can also be observed for \#Total and F-norm during training, but their movements are not synchronized. 
Among the training methods, the overall trends share more similarities than differences, except for with or without weight decay. 
Typical instances are shown in Figure \ref{fig:trend} and \ref{fig:trend2}. 

\begin{figure}[ht]
    \centering
     \includegraphics[width=0.49\textwidth
    ]{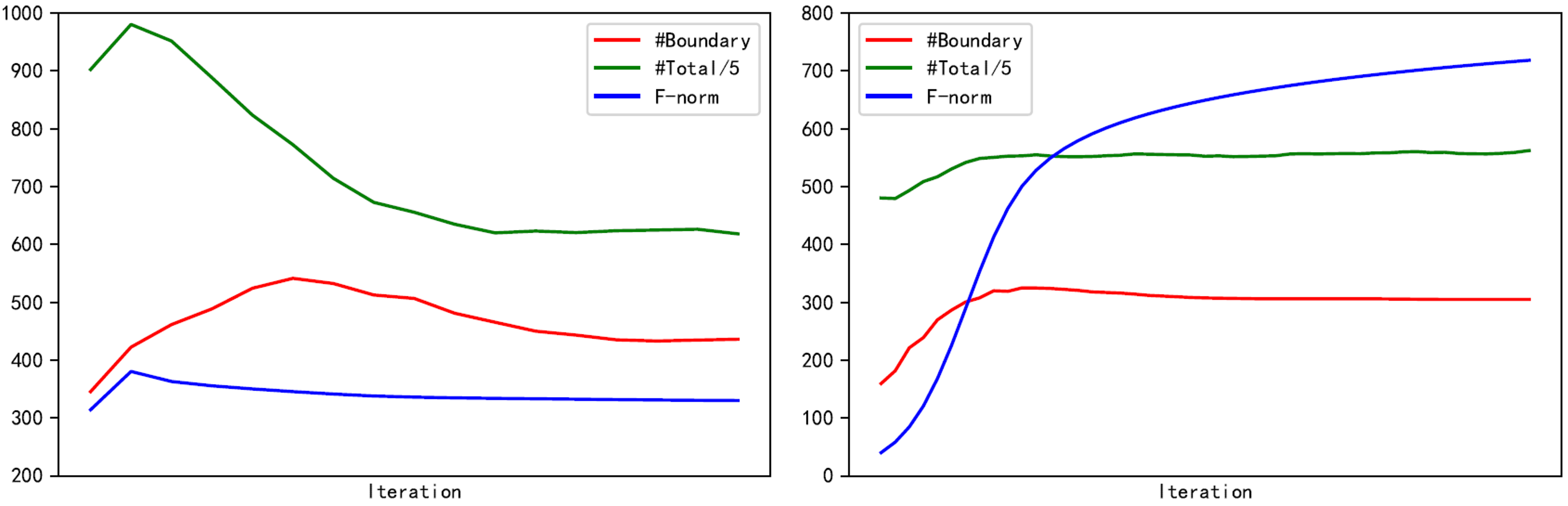}
    \caption{Training trends of \#Boundary (red), \#Total (green), F-norm (red) vs. iteration in the 2D spiral case. Left: CE with weight decay; Right: CE without weight decay. }
    \label{fig:trend}
\end{figure}

\begin{figure}[ht]
    \centering
     \includegraphics[width=0.49\textwidth
    ]{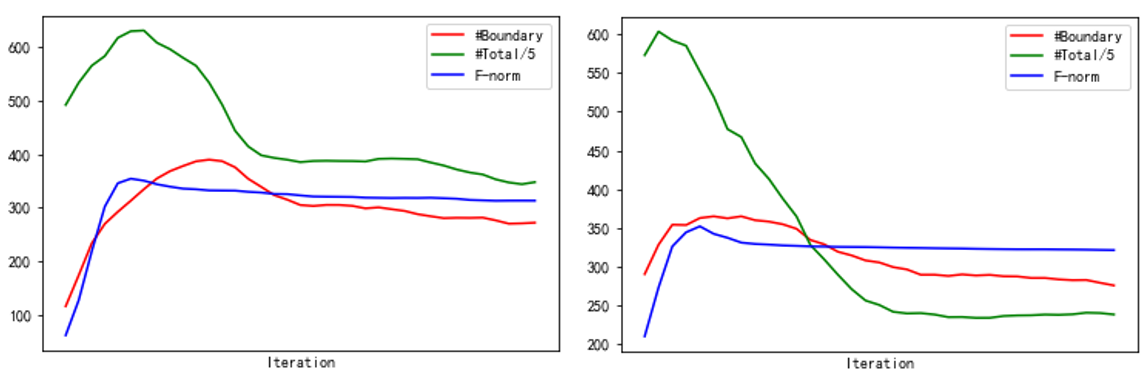}
    \caption{Training trends of \#Boundary (red), \#Total (green), F-norm (red) vs. iteration in the 2D spiral case. Left: Noisy with weight decay; Right: Adv with weight decay. }
    \label{fig:trend2}
\end{figure}

\paragraph{\#Boundary vs others.}
The left figure in Figure \ref{fig:trend} shows the typical trends in the Noisy case with weight decay, where we can clearly see that \#Boundary lags behind the others. 
When the training starts, \#F-norm and \#Total peak much earlier than \#Boundary.  
In most cases, we observe that F-norm peaks first, then \#Total, and lastly \#Boundary. 
When robust training is applied (Noisy, Adv), the gaps among them widen. 
In the later stage, F-norm stabilizes much faster than the others, while we can consistently observe that \#Boundary flattens slower than \#Total. Overall, \#Boundary appears to change much slower than the others, taking more time to peak, and more time to plateau.

\paragraph{Role of weight decay.}
The right figure in Figure \ref{fig:trend} shows a typical trend in the CE case without weight decay, which demonstrates drastically different behaviors. 
\#Boundary and \#Total plateau much earlier and do not change much once the classifier has overfit the training data. In comparison, F-norms keep getting larger, which is to be expected due to the use of cross-entropy loss. 
Weight decay is found to play an important role in the forming of ReLU networks' geometric structures. This is surprising as naively shrinking a ReLU network does not change its affine piece arrangement.

\subsection{Classification Robustness}

In this section, we aim to investigate the relationship between robustness and \#Boundary. However, in the absence of practical algorithms to regularize the boundary complexity, we turn to popular robust training methods and evaluate whether they can significantly reduce \#Boundary. 
Results for the Gaussian mixture and spiral case are reported in Table \ref{tab:robust1} and Table \ref{tab:robust2}, respectively.

\begin{table}
    \caption{Comparison of boundary piece counts in the Gaussian mixture case for ReLU network with layer widths 2-10-10-1. The reported number is an average (standard deviation) of 10 repetitions. 
    }
    \centering
    \scriptsize
    \begin{tabular}{l|ccccc}
    \hline
     & \#Boundary  & \#Total  & F-norm & Acc\% & $R(0.2)$\\
    \hline
    Initial & 29 (17) & 290 (29)   & 6.8 (0.61) & 50.1 (1.1) & -\\ 
    \hline
        CE & 43 (5.3) & 190 (24)   & 57 (3.6) & 100 & 96.4  \\
        Noisy    & 41 (3.1)  & 216 (26)  & 67 (2.6)  & 100 & 97.0  \\
        Adv & 36 (4.6)  & 172  & 73 (2.1) & 100  & 97.2 \\
      \hline
    \end{tabular}
    \label{tab:robust1}
\end{table}
\begin{table}
    \caption{Comparison of boundary piece counts in the 2D spiral case for ReLU network with layer widths 2-30-30-1. The reported number is an average (standard deviation) of 10 repetitions.}
    \centering
    \scriptsize
    \begin{tabular}{l|ccccc}
    \hline
     & \#Boundary  & \#Total  & F-norm & Acc\% & $R(0.02)$\\ \hline
      Initial & 90 (61) & 2432 (179)   & 20 (0.71) & 50.2 (1.2) & -\\ \hline
        CE & 377 (31) & 1915 (207)   & 283 (11) & 93.60 (1.8) & 94.3 (2.2) \\
        Noisy    & 272 (33)  & 1493 (114)  & 322 (17) & 99.15 (0.56) & 98.1 (0.51)\\
        Adv & 259 (21)  & 1241 (135)  & 356 (19) & 99.35 (0.38) & 98.9 (0.36) \\
       \hline
    \end{tabular}
    \label{tab:robust2}
\end{table}

In the simpler Gaussian mixture case, the strength for Noisy and Adv are both set at $0.1$, the same as the variance of each mixing component. 
Figure \ref{fig:gaussian} shows the decision boundaries for CE, Noisy and Adv. Despite the apparent visual difference, the \#Boundary does not differ that much. In Table \ref{tab:robust1}, we can observe \#Boundary to be smaller on average for Noisy and especially Adv. 

The effects of Noisy and Adv become more significant in the harder, more challenging spiral case. CE does not perform as consistently as Noisy or Adv and sometimes will miss the spiral shape.  
The strength for Noisy and Adv are both set at $0.01$, which is roughly the size of the margin. As can be seen from Table \ref{tab:robust2}, both \#Boundary and \#Total significantly dropped while F-norm stays relatively on the same level. 

On both datasets, compared with CE, Noisy and Adv have strong effects on reducing the boundary complexity. The same is not true for function complexity such as F-norm.

\section{Discussion}
We advocate that proper regularization on the decision boundary is of critical importance to classification. As a proof of concept, we choose the number of linear pieces of ReLU networks to measure the boundary complexity, due to its well-definedness. 
The main technical contribution is the explicit formula to count the exact number of boundary pieces as well as total affine pieces. Empirical evaluation and justification are made on synthetic data and interesting properties of the boundary piece count are revealed.

\paragraph{Limitations and extensions.}
(1) While the main focus of this work is on rectified linear units, our method can easily be extended to leaky ReLU activation, and basically all other piecewise linear functions. 
(2) In the experiments, we only evaluated binary classification. However, it is also quite straightforward to count the boundaries between any two given classes in the multi-class classification scenario. 
(3) In the present form, the computation scaling with respect to the network size is impractical for large models, especially with input dimension and depth. The most time-consuming part is the Minkowski sum. However, most of them do not directly contribute to the level set. We believe that further optimizations could shed more light on the mechanics of training procedures. Moreover, incorporating differentiability would give a penalty term that regularizes a previously unaddressed aspect of the network. 
(4) Though intuitive, the number of boundary pieces may not be the best choice for the complexity measurement in classification, since it doesn't take finer details such as piece arrangement into consideration. How to better quantify boundary complexity remains an open question.

\paragraph{Regularizing the boundary complexity.}
Given a measurable boundary complexity, regularizing it during the training process can be challenging.
Adversarial training or noise injection can act as a regularization for boundary complexity, as verified in our experiment.  
Defining suitable boundary complexity measurement and proposing direct and more efficient ways to control it is an open question. 
The aim of this work is to identify such an important problem and convince the readers that boundary complexity is indeed proper to regularize for classification robustness. 
Such regularization is not at odds with other established methods, but a healthy complement to existing literature. 
The level set sampling method proposed in \citet{atzmon2019controlling} may be a good starting point. Uncovering the link of our work to persistent homology \cite{chen2019topological} is also interesting. 
We hope that further work will lead to achieving our ultimate goal -- designing practical and scalable algorithms for effective regularization and thus improving state-of-the-art performance in classification. 

%%%%%%%%%%%%%%

\bibliography{reference}

%%%%%%%%%%%%%%

\newpage
\onecolumn
\section*{Appendix}\label{appendix}

\subsection*{Proof of Proposition \ref{CPLs}}

\begin{proof}
    Denote \(\vec{z} = t \x + (1-t) \y\) and assume that at \(\vec{z}\) the \(i\)-th function is largest, i.e. \(F(\vec{z}) = A_i \vec{z} + b_i\). Then
    \begin{align*}
        F(\vec{z}) = t (A_i \x + b_i) + (1 - t) (A_i \y + b_i) \leq t F(\x) + (1 - t) F(\y)
    \end{align*}
\end{proof}

\subsection*{Proof of Proposition \ref{DCPAs}}

\begin{proof}
    The proof is by induction. We need to prove two facts. First, applying a linear function to a vector of DCPAs produces another vector of DCPAs; second, that a maximum of two DCPAs is a DCPA.
    
    Let \(F-G\) be a vector of DCPAs, where \(F\) and \(G\) are vectors of \(n\) CPAs and \(A\) be an \(m\times n\) matrix with real coefficients. Write \(A = A_+ - A_-\) where both \(A_+\) and \(A_-\) have non-negative entries. Then we have
    \[A\:(F-G) = (A_+ - A_-)\:(F-G) = (A_+F + A_-G) - (A_-F + A_+G).\]
    This proves the first fact.
    
    The second fact is easy to see from \(\max\{a,b\}+c = \max\{a+c,b+c\}\) and \(\max\{a,\max\{b,c\}\} = \max\{a,b,c\}\).
\end{proof}

\subsection*{Proof of Proposition \ref{duality}}

\begin{proof}
The proof follows the following steps. 
    \begin{enumerate}
        \item Let \(\vec{c} = (\vec{a}, b)\) and \(H = (\vec{a} \mapsto \x^\intercal \vec{a}^\intercal + y\). Then both \(\vec{c} \in H\) and \(\cR(H) \in \cR(\vec{c})\) are equivalent to \(b = \x^\intercal \vec{a} + y\).
        \item \(k\)-dimensional dual plane \(F\) can be written as an intersection of \(d-k\) dual hyperplanes \(\cR^{-1}(\vec{z}_0), \dots, \cR^{-1}(\vec{z}_{d-k})\). A dual point \(\cR^{-1}(f)\) belongs to \(F\) if and only if it is a dual of a real hyperplane \(f\) that contains the real points \(\vec{z}_0, \dots, \vec{z}_{d-k}\). Their affine span is the common plane we are looking for, and what we christen \(\cR(F)\).
        
        It is affinely spanned by \(d-k+1\) points, so its dimension is at most \(d-k\). If it was smaller, we could forget some \(\vec{z}_i\), which means that \(F\) was an intersection of \(d-k-1\) hyperplanes, and had dimension at least \(k+1\).
        \item \(F\) is contained in \(G\) if and only if for any hyperplane \(H\) we have \[G \subseteq H \Rightarrow F \subseteq H\] This happens precisely when for all points \(\vec{z} = \cR(H)\) we have \[\vec{z} \in \cR(G) \Rightarrow \vec{z} \in \cR(F)\] that is \(\cR(G) \subseteq \cR(F)\).
        \item Let \(f : \x \mapsto \vec{a}^\intercal \x + b\). Then \(p(\cR^{-1}(f)) = \vec{a}\), which is perpendicular to surfaces \(\vec{a}^\intercal \x = \text{const.}\)
        \item Let \(\vec{c} = (\vec{a}, b)\) and \(H : \vec{d} \mapsto \x^\intercal \vec{d} + y\). Then both \(\vec{c} \succ H\) and \(\cR(\vec{c}) \succ \cR(H)\) are equivalent to \(b > \vec{x}^\intercal \vec{a} + y\).
        \item Suppose \(\vec{c} = (\vec{a}, b),  \vec{c}' = (\vec{a}, b + \Delta)\), and denote \(f : \x \mapsto \vec{a}^\intercal \x + b\). Then \(\cR(\vec{c}) = f, \cR(\vec{c}') = f + \Delta\) -- these functions differ by a constant, so specify parallel planes. The proof for \(\cR^{-1}\) is analogous.
    \end{enumerate}
\end{proof}

\subsection*{Proof of Proposition \ref{max-hull}}

\begin{proof}
    Firstly, let us compare the planes dual to two points, \(\vec{s}_1\) and \(\vec{s}_2\), such that \(\vec{s}_1\) lies directly above \(\vec{s}_2\). This means that they differ only at the very last coordinate---let's say that \(\vec{s}_1 = (\vec{a}_1,b_1)\) and \(\vec{s}_2 = (\vec{a}_2,b_2)\) where \(b_1 \ge b_2\). Then the dual planes \(\cR(\vec{s}_1)\) and \(\cR(\vec{s}_2)\) are precisely
    \[\cR(\vec{s}_1) = \{(\vec{x},y_1) | y_1 = (\vec{a}_1)^\intercal \x + b_1\}, \] 
    \[\cR(\vec{s}_2) = \{(\vec{x},y_2) | y_2 = (\vec{a}_2)^\intercal \x + b_2\},\]
    and since \((\vec{a}_1)^\intercal\x + b_1 \geq (\vec{a}_2)^\intercal\x + b_2\) for all \(\x \in \R^d\), the plane \(\cR(\vec{s}_1)\) lies above \(\cR(\vec{s}_2)\).
    
    Secondly, let us consider a point \(\vec{s}\) in the dual space lying on a segment whose endpoints are \(\vec{s}_1\) and \(\vec{s}_2\). But then for some \(p \in [0,1]\) we have \(\vec{s} = p\cdot \vec{s}_1 + (1-p)\cdot \vec{s}_2\) and thus
    \[(\vec{s})^\intercal \begin{bmatrix} \x \\ 1\end{bmatrix} = p\cdot \Bigg((\vec{s}_1)^\intercal \begin{bmatrix} \x \\1 \end{bmatrix}\Bigg) + (1-p)\cdot \Bigg((\vec{s}_2)^\intercal \begin{bmatrix} \x \\ 1 \end{bmatrix}\Bigg),\]
    so, in particular,
    \[(\vec{s})^\intercal \begin{bmatrix} \x \\ 1\end{bmatrix} \leq \max \Bigg\{(\vec{s}_1)^\intercal \begin{bmatrix} \x \\1 \end{bmatrix},\quad (\vec{s}_2)^\intercal \begin{bmatrix} \x \\ 1 \end{bmatrix}\Bigg\}.\]
    
    Thirdly, we want to piece the two together. For a point \(\vec{s}_2\) lying below \(\U(S)\), let us choose a point \(\vec{s}_1\in \U(S)\) lying exactly above \(\vec{s}_2\). The plane defined by it lies above the one defined by \(\vec{s}_2\) according to the first paragraph. Now we only need to show that points on \(\U(S)\) define planes lying below the minimum, but this follows from the second paragraph and the fact that all points on a convex hull of a finite set of points can be generated by taking segments whose ends lie in the hull and adding all of the points of the segment to the hull.
\end{proof}

\subsection*{Proof of Proposition \ref{basic-properties}}

\begin{proof}
    This is a straightforward consequence of the more elementary identities for scalar \(a, b\): after reducing to upper hulls we have
    \begin{align}
        (a + b) X =& (a X) \oplus (b X) \label{apdx:scalar_right_distributivity}\\
        a (X \oplus Y) =& (a X) \oplus (a Y) \\
        (a b) X =& a (b X) \\
        a (X \cup Y) =& (a X) \cup (a Y)
    \end{align}
    Except~\ref{apdx:scalar_right_distributivity}, all of these hold even before taking the hull. To deal with this one, note that
    \begin{align*}
        (a + b) X &= \{ ax + bx | x \in X \} \\
        &\subseteq \{ ax_1 + bx_2 | x_1, x_2 \in X \} = (aX) \oplus (bX)
    \end{align*}
    so we have \(\U((a + b) X) \subseteq \U \big( (a X) \oplus (b X) \big)\). To see the reverse inclusion, write
    \begin{equation*}
        a x_1 + b x_2 = \tfrac{a}{a+b} (a+b) x_1 + \tfrac{b}{a+b} (a+b) x_2
    \end{equation*}
    which means that
    \begin{equation*}
        (a X) \oplus (b X) \subseteq \U \big( (a+b) X \big)
    \end{equation*}
\end{proof}

\subsection*{Proof of Proposition \ref{explicit}}

\begin{proof}
    Firstly, let us note that
    \begin{align*}
        A_l\: F_{l-1} =& (A_l^+ - A_l^-)\: \big(\cR(P_{l-1}) - \cR(N_{l-1})\big) \\
     =& \big(A_l^+\: \cR(P_{l-1}) + A_l^-\: \cR(N_{l-1})\big) \\
     & - \big(A_l^-\: \cR(P_{l-1}) + A_l^+\: \cR(N_{l-1})\big) \\
     =& \cR\big((A_l^+\otimes P_{l-1}) \oplus (A_l^- \otimes N_{l-1})\big) \\
     &- \cR\big((A_l^-\otimes P_{l-1}) \oplus (A_l^+ \otimes N_{l-1})\big).
    \end{align*}
    Now, we use the fact that \(\max\{x-y,0\} = \max\{x,y\}-y\) to get that for \(N_l = (A_l^-\otimes P_{l-1}) \oplus (A_l^+ \otimes N_{l-1})\), we have
    \begin{align*}
        &\sigma(A_l\: F_{l-1}) \\
        &= \max\{\cR\big((A_l^+\otimes P_{l-1}) \oplus (A_l^- \otimes N_{l-1})\big), \cR(N_l)\} - \cR(N_l)\\
        &=\cR\big((A_l^+\otimes P_{l-1}) \oplus (A_l^- \otimes N_{l-1}) \cup N_l\big) - \cR(N_l),
    \end{align*}
    and thus, for \(P_l = (A_l^+\otimes P_{l-1}) \oplus (A_l^- \otimes N_{l-1}) \cup N_l\), we get
    \[F_l = \sigma(A_l F_{l-1}) = \cR(P_l) - \cR(N_l).\]
\end{proof}

\subsection*{Proof of Proposition \ref{scary}}

\begin{proof}
    \(k\)-cell of \(\T(S)\) is the region defined by the system
    \begin{align}
        f_{i_0} (\x) =& \dots = f_{i_{d-k}} (\x) \label{decision_boundary_linear}\\
        f_{i_0} (\x) \geq& f_j(\x) \text{ for } j \neq i_0, \dots, i_{d-k} \nonumber
    \end{align}
    This can be written as \[(\x, y) \in f_{i_0}, \dots, f_{i_{d-k}} \qquad (\x, y) \succcurlyeq f_j\] In dual space this becomes 
    \[\cR^{-1}((\x, y)) \ni \cR^{-1}(f_{i_0}), \dots, \cR^{-1}(f_{i_{d-k}}) \]
    \[\cR^{-1}((\x, y)) \succcurlyeq \cR^{-1}(f_j)\] Therefore, the duals of points of the \(k\)-cell are precisely the dual planes containing the \((d-k)\)-cell on vertices \(\cR^{-1}(f_{i_0}), \dots, \cR^{-1} (f_{i_{d-k}})\) and tangent to the upper convex hull.
\end{proof}

\subsection*{Proof of Proposition \ref{new-easy}}

\begin{proof}
    The cell of \(\T(P \cup N)\) is a boundary cell iff in the equation~\ref{decision_boundary_linear}, we have both some function \(f_i \in \cR(P)\) and some function \(g_j \in \cR(N)\). This happens exactly when the dual cell has some vertex \(\cR^{-1}(f_i) \in P\) as well as some vertex \(\cR^{-1}(g_j) \in N\).
\end{proof}

\subsection*{Proof of Proposition \ref{new-hard}}

\begin{proof}
    Again, as before, we need to identify those linear pieces of \(\max\{F,G\}\), which lie on the linear pieces of \(F\) and of \(G\). However, this means identifying cells of \(\U(P \cup N)\) which \emph{contain} a cell of \(\U(P)\) and a cell of \(\U(N)\) (this is due to the duality reversing containment of hyperplanes; we mean set-wise containment here, not containment as subcells).
\end{proof}

\subsection*{Proof of Proposition \ref{DCPL_affine_pieces}}

\begin{proof}
    A \(k\)-dimensional cell \(\sigma\) is the set of \(\x\) satisfying the system
    \begin{align*}
        f_{i_0} (\x) &= \dots = f_{i_a} (\x) = s > f_{i'} (\x) \\
        g_{j_0} (\x) &= \dots = g_{j_b} (\x) = t > g_{j'} (\x)
    \end{align*}
    Where \(a + b = d - k\). This can be expressed as relations in the real space
    \begin{align*}
        (\x, s) &\in f_{i_0}, \dots, f_{i_a} \qquad (\x, s) \succ f_{i'} \\
        (\x, t) &\in g_{j_0}, \dots, g_{j_b} \qquad (\x, t) \succ g_{j'}
    \end{align*}
    After passing to the dual space this becomes
    \begin{align}
        \cR^{-1} \big( (\x, s) \big) &\ni \cR^{-1} (f_{i_0}), \dots, \cR^{-1} (f_{i_a}) \label{eqn:pts_on_face_f} \\
        \cR^{-1} \big( (\x, s) \big) &\succ \cR^{-1} (f_{i'}) \label{eqn:pts_below_face_f} \\
        \cR^{-1} \big( (\x, t) \big) &\ni \cR^{-1} (g_{j_0}), \dots, \cR^{-1} (g_{j_b}) \label{eqn:pts_on_face_g} \\
        \cR^{-1} \big( (\x, t) \big) &\succ \cR^{-1} (g_{j'}) \label{eqn:pts_below_face_g}
    \end{align}
    We know that \(\cR^{-1} \big( (\x, s) \big) \) and \(\cR^{-1} \big( (\x, t) \big) \) are a pair of parallel hyperplanes; the former is tangent to \(\U(P)\) (\ref{eqn:pts_below_face_f}) and contains its \(a\)-cell (\ref{eqn:pts_on_face_f}), while the latter is tangent to \(\U(N)\) (\ref{eqn:pts_below_face_g}) and contains its \(b\)-cell (\ref{eqn:pts_on_face_g}).
    
    View these hyperplanes as subsets of \(\mathbb{R}^{d+1}\) and consider their Minkowski sum \(\cR^{-1} \big( (\x, s) \big) \oplus \cR^{-1} \big( (\x, t) \big)\). It is straightforward to verify that it equals the hyperplane \(\cR^{-1} \big( (\x, s + t) \big)\). Since the relation \(\succ\) of lying above is preserved by translations, we have
    \begin{equation*}
        \cR^{-1} \big( (\x, s + t) \big) = \cR^{-1} \big( (\x, s) \big) \oplus \cR^{-1} \big( (\x, t) \big) \succcurlyeq \cR^{-1} (f_i) + \cR^{-1} (g_j) \qquad\qquad \text{for all } \cR^{-1} (f_i) \in P, \cR^{-1} (g_j) \in N
    \end{equation*}
    This means that the plane \(\cR^{-1} \big( (\x, s + t) \big)\) is tangent to \(\U(P \oplus N)\). Also, it contains the \((a + b = d - k)\)-cell \(\sigma'\) on vertices
    \begin{equation}\label{eqn:cell_in_minkowski_sum_PN}
        \left\{ \cR^{-1}(f_{i_\alpha}) + \cR^{-1}(g_{j_\beta}) \ | \ 0 \leq \alpha \leq a, 0 \leq \beta \leq b \right\}
    \end{equation}

    Conversely, suppose a hyperplane \(H\) is tangent to \(\U(P \oplus Q)\) and contains the \((d-k)\)-cell \(\sigma'\)on the vertices from equation \ref{eqn:cell_in_minkowski_sum_PN}. Let \(\x = p(H)\) be the vector of linear coefficients of \(H\). If we had \(f_{i'} (\x) > f_{i_\alpha} (\x)\) for any \(i' \notin \{ i_0, \dots, i_a \} \ni i_\alpha\), then the point \(\cR^{-1} (f_{i'}) + \cR^{-1} (g_{j_0})\) would lie above \(H\), which is impossible. Therefore we must have
    \begin{equation}
        f_{i_0} (\x) = \dots = f_{i_a} (\x) > f_{i'} (\x)
    \end{equation}
    and a similar set of conditions involving \(g\)'s. This means that \(x = p(H)\) lies in the real cell \(\sigma\).

    These functions are mutually inverse, and hence provide a bijection between real points of \(\sigma\) and dual tangent hyperplanes containing \(\sigma'\).

    Since every point of the real space belongs to a unique cell, and every dual hyperplane tangent to \(\U (P \oplus N)\) intersects it in a unique cell, the assignment \(\sigma \leftrightarrow \sigma'\) is bijective.
\end{proof}
\begin{remark}
    Sign of the function on the cell (equivalently,  the class to which the region belongs) depends on which of \(\cR^{-1} \big( (\x, s) \big), \cR^{-1} \big( (\x, t) \big)\) lies above the other.
\end{remark}

\subsection*{Numerical Experiments Details} \label{app:exp}
The neural networks are initialized by the default Uniform distribution\footnote{The default weight initialization in \texttt{torch.nn.linear} is uniform on $[-\sqrt{1/N}, \sqrt{1/N}]$ where $N$ is the width.}. 
For all ReLU neural networks, the optimization is done by stochastic gradient descent with learning rate$=0.1$, momentum$=0.9$ and weight decay$=0.001$ (if not specified otherwise). 

\paragraph{2D spiral}
The synthetic spiral data is from the two-dimensional distribution $P = (\rho \sin \theta + 0.04,  \rho \cos \theta)$ where $\rho = {(\theta/4 \pi)}^{4/5} + \epsilon$ with selected $\theta$ from $(0, 4\pi]$ and $\epsilon \sim \text{unif}([-0.03,0.03])$. We draw 300 positive and 300 negative training samples from $-P$ and $P$, respectively, with a random seed fixed for every run. Both the Gaussian noise injection strength and the adversarial training strength are set at $0.01$. 

\paragraph{2D Gaussian mixture}
There are $3\times 3$ mixing components, each is an isotropic Gaussian with standard deviation $\sigma=0.1$. The means are grid points from $\{-1, 0, 1\} \times \{-1, 0, 1\}$. The mixing weight is equal for all components.
Both the Gaussian noise injection strength and the adversarial training strength are set at $0.1$.

Below we show some training trends for CE, Noisy and Adv in the Gaussian mixture case. It is worth noting that all trend plots in this work, including Figure \ref{fig:trend} and \ref{fig:trend2} are smoothed with moving averages. 

\begin{figure}[ht]
    \centering
     \includegraphics[width=0.5\textwidth
    ]{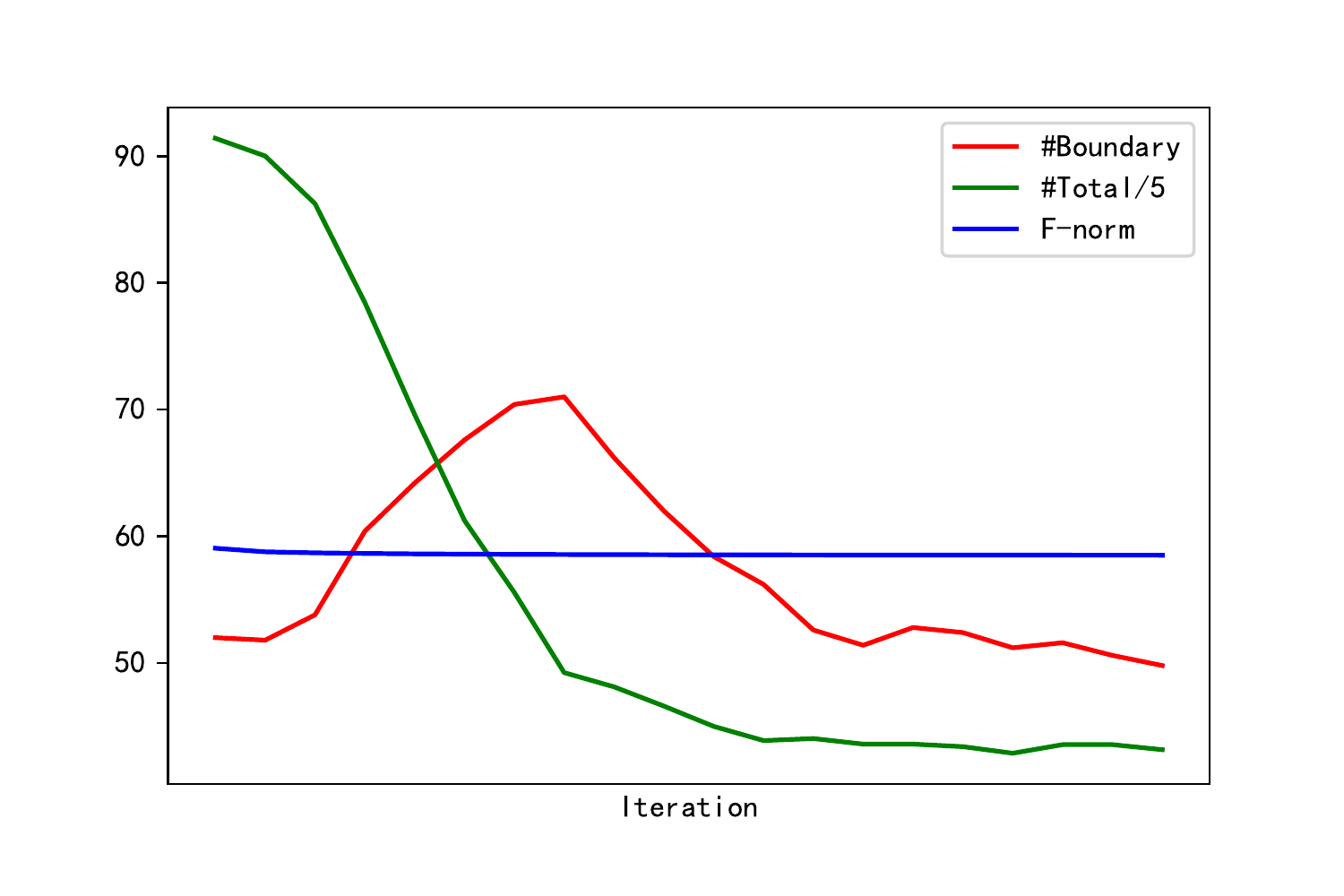}
    
    \caption{CE training trends of \#Boundary (red), \#Total (green), F-norm (red) vs. iteration in the Gaussian mixture case.}
  
\end{figure}

\begin{figure}[ht]
    \centering
     \includegraphics[width=0.5\textwidth
    ]{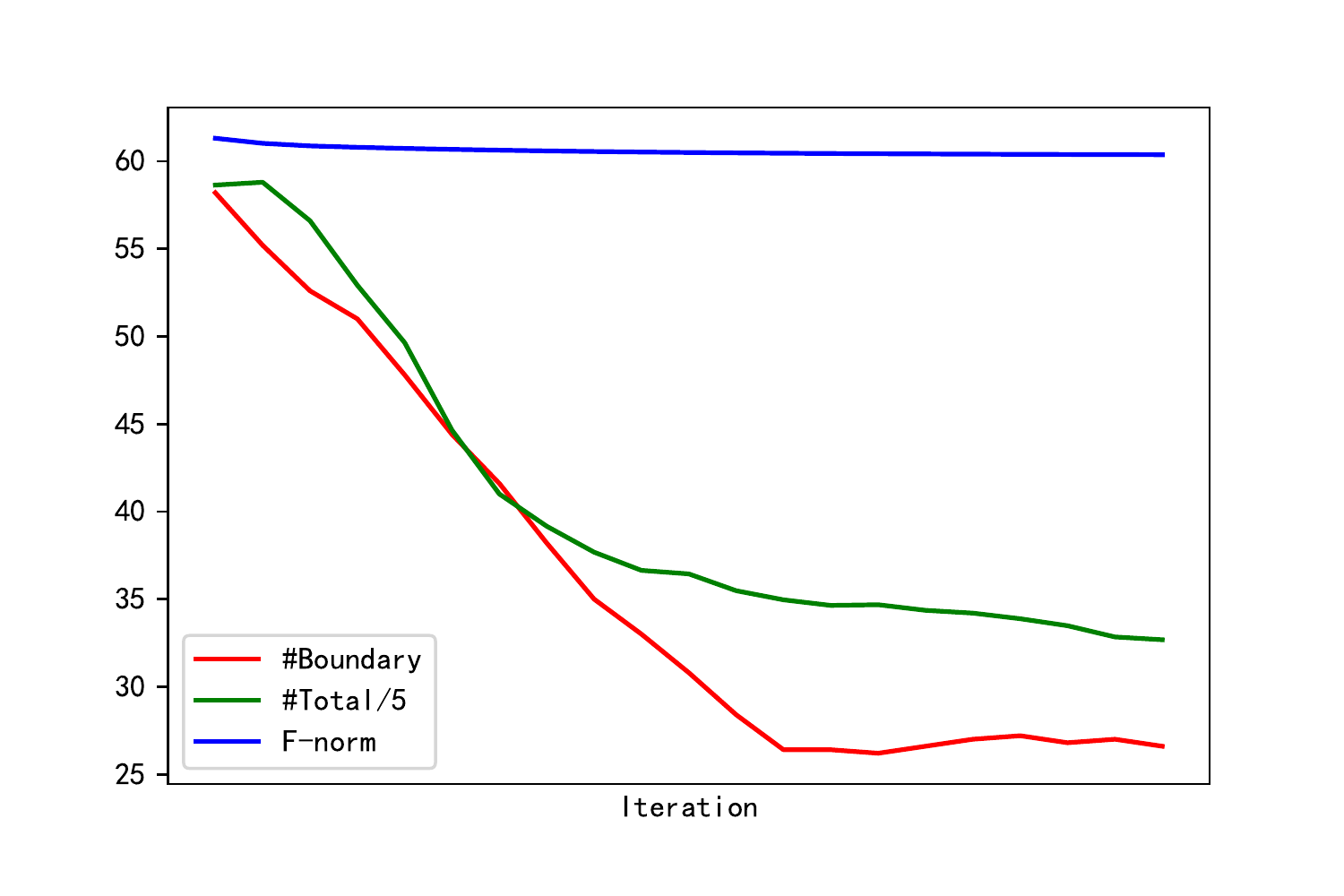}
   
    \caption{Noisy training trends of \#Boundary (red), \#Total (green), F-norm (red) vs. iteration in the Gaussian mixture case.}
    
\end{figure}

\begin{figure}[!ht]
    \centering
     \includegraphics[width=0.5\textwidth
    ]{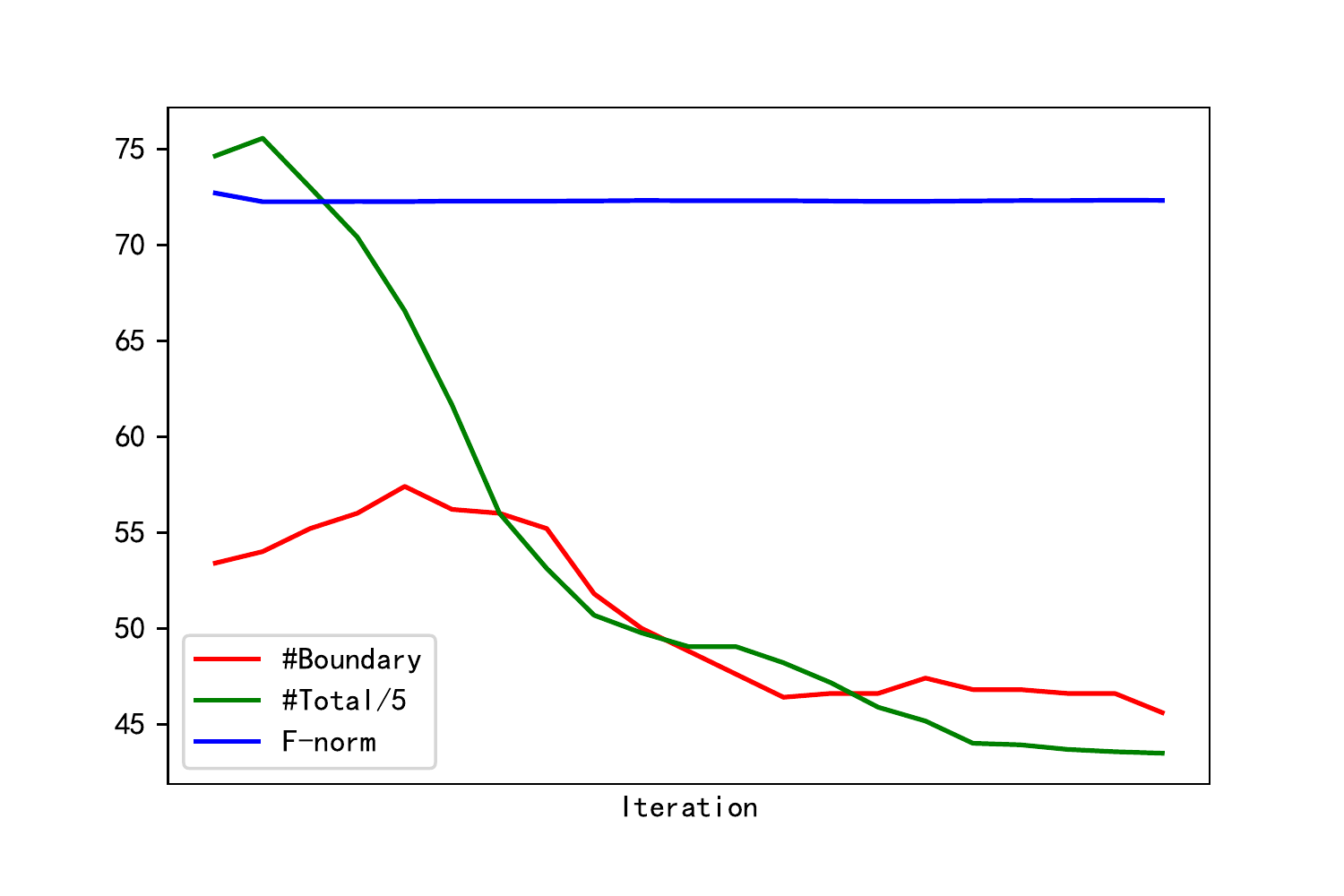}
   
    \caption{Adv training trends of \#Boundary (red), \#Total (green), F-norm (red) vs. iteration in the Gaussian mixture case.}
    
\end{figure}

\subsection*{An example of computation with Proposition~\ref{explicit}}

First we should note that
in a standard ReLU network the transition functions
are any affine functions
but we can introduce a `dummmy dimension' to realise these
as \emph{linear} functions.

We will consider a very simple network
with two-dimensional input,
one hidden layer with three neurons,
and the following transition matrices
(with the dummy dimension included).
For illustrative purposes we assume that
ReLU is applied also at the last layer.

\begin{equation*}
    A_1 = \begin{bmatrix}
        1   & -0.5  & 4  \\
        -2  & 1     & 0  \\
        3   & 3     & -1 \\
        0   & 0     & 1
    \end{bmatrix},\quad
    A_2 = \begin{bmatrix}
        0.5 & -1    & -0.5 & 2 \\
        0   & 0     & 0    & 1
    \end{bmatrix}
\end{equation*}

The input function \(F_0 = (x, y, 1)\)
(where the last coordinate is a dummy)
is decomposed into \(\cR(P_0) - \cR(N_0)\)
with
\begin{equation*}
    P_0 = \begin{pmatrix}
        \{(1,0,0)\} \\
        \{(0,1,0)\} \\
        \{(0,0,1)\}
    \end{pmatrix},\quad
    N_0 = \begin{pmatrix}
        \{(0,0,0)\} \\
        \{(0,0,0)\} \\
        \{(0,0,0)\} \\
    \end{pmatrix}.
\end{equation*}

To compute \(P_1\) and \(N_1\),
we need to decompose the matrix \(A_1\)
into its positive and negative parts \(A_1^+\) and \(A_1^-\).
\begin{align*}
    N_1 & = (A_1^+ \otimes N_0) \oplus (A_1^- \otimes P_0) \\
        & =
        \left(\begin{bmatrix}
            1   & 0     & 4  \\
            0   & 1     & 0  \\
            3   & 3     & 0  \\
            0   & 0     & 1
        \end{bmatrix}
        \otimes
        \begin{pmatrix}
            \{(0,0,0)\} \\
            \{(0,0,0)\} \\
            \{(0,0,0)\} \\
        \end{pmatrix}
        \right)\oplus\left(
        \begin{bmatrix}
            0   & 0.5   & 0  \\
            2   & 0     & 0  \\
            0   & 0     & 1 \\
            0   & 0     & 0
        \end{bmatrix}
        \otimes
        \begin{pmatrix}
            \{(1,0,0)\} \\
            \{(0,1,0)\} \\
            \{(0,0,1)\}
        \end{pmatrix}\right)       \\
        & =
        \begin{pmatrix}
            1 \{(0,0,0)\} \oplus 0 \{(0,0,0)\}
            \oplus 4 \{(0,0,0)\} \\
            0 \{(0,0,0)\} \oplus 1 \{(0,0,0)\}
            \oplus 0 \{(0,0,0)\} \\
            3 \{(0,0,0)\} \oplus 3 \{(0,0,0)\}
            \oplus 0 \{(0,0,0)\} \\
            0 \{(0,0,0)\} \oplus 0 \{(0,0,0)\}
            \oplus 1 \{(0,0,0)\} \\
        \end{pmatrix}
        \oplus
        \begin{pmatrix}
            0 \{(1,0,0)\} \oplus 0.5 \{(0,1,0)\}
            \oplus 0 \{(0,0,1)\} \\
            2 \{(1,0,0)\} \oplus 0 \{(0,1,0)\}
            \oplus 0 \{(0,0,1)\} \\
            0 \{(1,0,0)\} \oplus 0 \{(0,1,0)\}
            \oplus 1 \{(0,0,1)\} \\
            0 \{(1,0,0)\} \oplus 0 \{(0,1,0)\}
            \oplus 0 \{(0,0,1)\} \\
        \end{pmatrix}               \\
        & =
        \begin{pmatrix}
            \{(0,0,0)\} \\
            \{(0,0,0)\} \\
            \{(0,0,0)\} \\
            \{(0,0,0)\} \\
        \end{pmatrix}
        \oplus
        \begin{pmatrix}
            \{(0,0.5,0)\} \\
            \{(2,0,0)\} \\
            \{(0,0,1)\} \\
            \{(0,0,0)\} \\
        \end{pmatrix}
        =
        \begin{pmatrix}
            \{(0,0.5,0)\} \\
            \{(2,0,0)\} \\
            \{(0,0,1)\} \\
            \{(0,0,0)\} \\
        \end{pmatrix}           \\
    P_1 & = (A_1^+ \otimes P_0) \oplus (A_1^- \otimes N_0) \cup N_1 \\
    & =
    \begin{pmatrix}
        \{(1,0,4)\} \\
        \{(0,1,0)\} \\
        \{(3,3,0)\} \\
        \{(0,0,1)\} \\
    \end{pmatrix}
    \oplus
    \begin{pmatrix}
        \{(0,0,0)\} \\
        \{(0,0,0)\} \\
        \{(0,0,0)\} \\
        \{(0,0,0)\} \\
    \end{pmatrix}
    \cup
    \begin{pmatrix}
        \{(0,0.5,0)\} \\
        \{(2,0,0)\} \\
        \{(0,0,1)\} \\
        \{(0,0,0)\} \\
    \end{pmatrix}           \\
    & =
    \begin{pmatrix}
        \{(1,0,4)\} \\
        \{(0,1,0)\} \\
        \{(3,3,1)\} \\
        \{(0,0,1)\} \\
    \end{pmatrix}
    \cup
    \begin{pmatrix}
        \{(0,0.5,0)\} \\
        \{(2,0,0)\} \\
        \{(0,0,1)\} \\
        \{(0,0,0)\} \\
    \end{pmatrix}
    =
    \begin{pmatrix}
        \{(1,0,4), (0, 0.5, 0)\} \\
        \{(0,1,0), (2, 0, 0)\} \\
        \{(3,3,1), (0, 0, 1)\} \\
        \{(0,0,0), (0,0,1)\} \\
    \end{pmatrix}
    \substack{=}_{\U^*}
    \begin{pmatrix}
        \{(1,0,4), (0, 0.5, 0)\} \\
        \{(0,1,0), (2, 0, 0)\} \\
        \{(3,3,1), (0, 0, 1)\} \\
        \{(0,0,1)\} \\
    \end{pmatrix}
\end{align*}
The last operation
is reducing to the upper hull vertices
and it doesn't change the dual function \(\cR(P_1)\).

We repeat this calculation for the next layer.

\begin{align*}
    N_2 & = (A_2^+ \otimes N_1) \oplus (A_2^- \otimes P_1)    \\
    & =
    \left(
    \begin{bmatrix}
        0.5 & 0    & 0 & 2 \\
        0   & 0     & 0    & 1
    \end{bmatrix}
    \otimes
    \begin{pmatrix}
        \{(0,0.5,0)\} \\
        \{(2,0,0)\} \\
        \{(0,0,1)\} \\
        \{(0,0,0)\} \\
    \end{pmatrix}
    \right)
    \oplus
    \left(
    \begin{bmatrix}
        0 & 1    & 0.5 & 0 \\
        0   & 0     & 0    & 0
    \end{bmatrix}
    \otimes
    \begin{pmatrix}
        \{(1,0,4), (0, 0.5, 0)\} \\
        \{(0,1,0), (2, 0, 0)\} \\
        \{(3,3,1), (0, 0, 1)\} \\
        \{(0,0,1)\} \\
    \end{pmatrix}
    \right)                      \\
    & =
    \begin{pmatrix}
        0.5\{(0,0.5,0)\} \\
        \{(0,0,0)\} \\
    \end{pmatrix}
    \oplus
    \begin{pmatrix}
        1\{(0,1,0), (2, 0, 0)\} \oplus 0.5\{(3,3,1), (0, 0, 1)\} \\
        \{(0,0,0)\} \\
    \end{pmatrix}                        \\
    & =
    \begin{pmatrix}
        \{(0,0.25,0)\} \\
        \{(0,0,0)\} \\
    \end{pmatrix}
    \oplus
    \begin{pmatrix}
        \{(0,1,0), (2, 0, 0)\} \oplus \{(1.5,1.5,0.5), (0, 0, 0.5)\} \\
        \{(0,0,0)\} \\
    \end{pmatrix}                         \\
    & =
    \begin{pmatrix}
        \{(0,0.25,0)\} \\
        \{(0,0,0)\} \\
    \end{pmatrix}
    \oplus
    \begin{pmatrix}
        \{(1.5, 2.5, 0.5), (0,1,0.5), (3.5, 1.5, 0.5), (2, 0, 0.5)\} \\
        \{(0,0,0)\} \\
    \end{pmatrix}                                   \\
    & =
    \begin{pmatrix}
        \{(1.5, 2.75, 0.5), (0,1.25,0.5), (3.5, 1.75, 0.5), (2, 0.25, 0.5)\} \\
        \{(0,0,0)\} \\
    \end{pmatrix} \\
    P_2 & = (A_2^+ \otimes P_1) \oplus (A_2^- \otimes N_1) \cup N_2 \\
    & =
    \left(
    \begin{bmatrix}
        0.5 & 0    & 0 & 2 \\
        0   & 0     & 0    & 1
    \end{bmatrix}
    \otimes
    \begin{pmatrix}
        \{(1,0,4), (0, 0.5, 0)\} \\
        \{(0,1,0), (2, 0, 0)\} \\
        \{(3,3,1), (0, 0, 1)\} \\
        \{(0,0,1)\} \\
    \end{pmatrix}
    \right)
    \oplus
    \left(
    \begin{bmatrix}
        0 & 1    & 0.5 & 0 \\
        0   & 0     & 0    & 0
    \end{bmatrix}
    \otimes
    \begin{pmatrix}
        \{(0,0.5,0)\} \\
        \{(2,0,0)\} \\
        \{(0,0,1)\} \\
        \{(0,0,0)\} \\
    \end{pmatrix}
    \right)
    \\&\quad\cup
    \begin{pmatrix}
        \{(1.5, 2.75, 0.5), (0,1.25,0.5), (3.5, 1.75, 0.5), (2, 0.25, 0.5)\} \\
        \{(0,0,0)\} \\
    \end{pmatrix} \\
    & =
    \begin{pmatrix}
        0.5\{(1,0,4), (0, 0.5, 0)\} \oplus 2\{(0,0,1)\} \\
        1 \{(0,0,1)\}
    \end{pmatrix}
    \oplus
    \begin{pmatrix}
        1\{(2,0,0)\} \oplus 0.5\{(0,0,1)\} \\
        \{(0,0,0)\}
    \end{pmatrix}
    \\&\quad\cup
    \begin{pmatrix}
        \{(1.5, 2.75, 0.5), (0,1.25,0.5), (3.5, 1.75, 0.5), (2, 0.25, 0.5)\} \\
        \{(0,0,0)\} \\
    \end{pmatrix} \\
    & =
    \begin{pmatrix}
        \{(0.5,0,2), (0, 0.25, 0)\} \oplus \{(0,0,2)\} \\
        \{(0,0,1)\}
    \end{pmatrix}
    \oplus
    \begin{pmatrix}
        \{(2,0,0)\} \oplus \{(0,0,0.5)\} \\
        \{(0,0,0)\}
    \end{pmatrix}
    \\&\quad\cup
    \begin{pmatrix}
        \{(1.5, 2.75, 0.5), (0,1.25,0.5), (3.5, 1.75, 0.5), (2, 0.25, 0.5)\} \\
        \{(0,0,0)\} \\
    \end{pmatrix} \\
    & =
    \begin{pmatrix}
        \{(0.5,0,4), (0, 0.25, 2)\} \\
        \{(0,0,1)\}
    \end{pmatrix}
    \oplus
    \begin{pmatrix}
        \{(2,0,0.5)\} \\
        \{(0,0,0)\}
    \end{pmatrix}
    \\&\quad\cup
    \begin{pmatrix}
        \{(1.5, 2.75, 0.5), (0,1.25,0.5), (3.5, 1.75, 0.5), (2, 0.25, 0.5)\} \\
        \{(0,0,0)\} \\
    \end{pmatrix} \\
    & =
    \begin{pmatrix}
        \{(2.5,0,4.5), (2, 0.25, 2.5)\} \\
        \{(0,0,1)\}
    \end{pmatrix}
    \\&\quad\cup
    \begin{pmatrix}
        \{(1.5, 2.75, 0.5), (0,1.25,0.5), (3.5, 1.75, 0.5), (2, 0.25, 0.5)\} \\
        \{(0,0,0)\} \\
    \end{pmatrix} \\
    & =
    \begin{pmatrix}
        \{(2.5,0,4.5), (2, 0.25, 2.5), (1.5, 2.75, 0.5), (0,1.25,0.5), (3.5, 1.75, 0.5), (2, 0.25, 0.5)\} \\
        \{(0,0,1), (0,0,0)\} \\
    \end{pmatrix}
\end{align*}

Now, to reduce the result to the upper hull vertices,
we can note that
\[\frac{1}{5}(0,1.25,0.5)
+ \frac{4}{5}(2.5,0,4.5)
= (0,0.25, 0.1)  + (2, 0, 3.6) = (2, 0.25, 3.7) \succ (2, 0.25, 2.5), (2, 0.25, 0.5),\]
so the two points of the right hand side can be dropped without changing the upper hull.
This gives
\begin{equation*}
    P_2 \substack{=}_{\U^*}
    \begin{pmatrix}
        \{(2.5,0,4.5), (1.5, 2.75, 0.5), (0,1.25,0.5), (3.5, 1.75, 0.5)\} \\
        \{(0,0,1)\}
    \end{pmatrix}.
\end{equation*}

Finally, let's recover the representation as a DCPA function.

\begin{align*}
    F_2(x, y) & = \left(\cR(P_2) - \cR(N_2)\right)(x, y)  \\
    & = \max\{1.5x + 2.75y + 0.5, 1.25y + 0.5, 3.5x + 1.75y + 0.5, 2.5x+4.5\} \\
    & - \max\{1.5x+2.75y+0.5, 1.25y + 0.5, 3.5x + 1.75y + 0.5, 2x + 0.25y + 0.5\}
\end{align*}

\subsection*{Examples of application of propositions \ref{new-hard} and corollary \ref{corr_affine_pc_count}}\label{apdx:DCPA_counting_examples}

\paragraph{One-dimensional example} Consider
\begin{align*}
    & f_1 (x) = -\tfrac{1}{2} x - \tfrac{3}{2} \qquad & f_2 (x) =& \tfrac{1}{2} x + \tfrac{1}{2} \qquad & f_3 (x) = 2 x + 1 \\
    & g_1 (x) = 0 \qquad & g_2 (x) =& 2 x \qquad & g_3 (x) = 3 x - 1
\end{align*}

The DCPA function \(F (x) = \max \{ f_1(x), f_2(x), f_3(x) \} - \max \{ g_1(x), g_2(x), g_3(x) \}\) is plotted in figure \ref{subfig:1d_example_plot}. It has 5 affine regions and 3 zeros.

It is represented by dual points as
\begin{align*}
    \max \{ f_1, f_2, f_3 \} = \mathcal{R} (P), \qquad P = \left\{ \begin{pmatrix} -\tfrac{1}{2} \\ -\tfrac{3}{2} \end{pmatrix}, \begin{pmatrix} \tfrac{1}{2} \\ \tfrac{1}{2} \end{pmatrix}, \begin{pmatrix} 2 \\ 1 \end{pmatrix} \right\} \\
    \max \{ g_1, g_2, g_3 \} = \mathcal{R} (N), \qquad N = \left\{ \begin{pmatrix} 0 \\ 0 \end{pmatrix}, \begin{pmatrix} 2 \\ 0 \end{pmatrix}, \begin{pmatrix} 3 \\ -1 \end{pmatrix} \right\}
\end{align*}
Their upper convex hull \(\U (P \cup N)\) is shown on figure \ref{subfig:1d_example_dual_points}. As predicted by proposition \ref{new-hard}, the zero set of \(F\) is in bijection with 1-cells of \(\U (P \cup N)\) which join a point of \(P\) with a point of \(N\). This bijection is shown explicitly in table \ref{subfig:1d_example_cells_to_zeros}. The \(x\)-coordinates of zeros of \(F\) are given by negative slopes of these 1-cells.

The hull of the Minkowski sum \(P \oplus N\) is shown in figure \ref{subfig:1d_example_dual_sum}. In agreement with corollary \ref{corr_affine_pc_count}, there are 5 vertices on \(\U (P \cup N)\). The explicit bijections between the vertices of \(\U (P \cup N)\) and affine regions of \(F\), and between tangents at each vertex and points of the corresponding linear region, is given in the table \ref{subfig:1d_example_tangents_to_points}.

\paragraph{Two-dimensional example} Take
\begin{align*}
    & f_1 = - x + y + 4 & \qquad & f_2 = x + y -2 & \qquad f_3 = - 2 x - y - 1 \\
    & g_1 = 0 & \qquad & g_2 = 2 x - y + 2 & \qquad g_3 = - x + 2 y + 2
\end{align*}
which correspond to dual points
\begin{equation*}
    P = \left\{ \begin{pmatrix} -1 \\ 1 \\ 4 \end{pmatrix}, \begin{pmatrix} 1 \\ 1 \\ -2 \end{pmatrix}, \begin{pmatrix} -2 \\ -1 \\ -1 \end{pmatrix} \right\}, \qquad N = \left\{ \begin{pmatrix} 0 \\ 0 \\ 0 \end{pmatrix}, \begin{pmatrix} 2 \\ -1 \\ 2 \end{pmatrix}, \begin{pmatrix} -1 \\ 2 \\ 2 \end{pmatrix} \right\}
\end{equation*}
The function \(F = \max \{ f_1, f_2, f_3 \} - \max \{ g_1, g_2, g_3 \}\) is shown on figure \ref{subfig:2d_example_plot}. There are 7 affine regions and 6 boundary pieces.

The configuration of dual points \(P \cup N\) is shown on figure \ref{subfig:2d_example_dual_points}. The upper convex hull \(\U (P \cup N)\) contains 4 faces, 8 edges and 5 vertices. As predicted by proposition \ref{new-hard}, edges joining a point of \(P\) with a point of \(N\) correspond precisely to those affine regions of \(F\) which contain a boundary piece. Explicitly, these are \(f_1 - g_2, f_1 - g_3, f_2 - g_2, f_2 - g_3, f_3 - g_2, f_3 - g_3\).

The Minkowski sum \(P \oplus N\) is shown in figure \ref{subfig:2d_example_dual_sum}. In agreement with corollary \ref{corr_affine_pc_count}, 7 of the vertices lie on the upper convex hull. Explicitly, the functions \(f_1 - g_1\) and \(f_1 - g_2\) are the only ones which do not have a nonempty affine region, and the points \(\cR^{-1} (f_1) + \cR^{-1} (g_1)\) and \(\cR^{-1} (f_2) + \cR^{-1} (g_2)\) are the only ones which lie fully below the upper convex hull.

\begin{figure}
    \centering
    \subfigure[Plot of \(F\).]{
        \begin{tikzpicture}[scale=1.5]
        % axes
        \draw[->] (-3.5, 0) -- (4.5, 0);
        \draw[->] (0, -1.5) -- (0, 1.5);
        % ticks
        \draw (-3, -.1) -- (-3, .1) (-2, -.1) -- (-2, .1) (-1, -.1) -- (-1, .1) (1, -.1) -- (1, .1) (2, -.1) -- (2, .1) (3, -.1) -- (3, .1) (4, -.1) -- (4, .1);
        \draw (-.1, -1) -- (.1, -1) (-.1, 1) -- (.1, 1);
        % function
        \draw (-3.5, .25) -- (-2, -.5) -- (-1/3, 1/3) -- (0, 1) -- (1, 1) -- (3, -1);
    \end{tikzpicture}
    \label{subfig:1d_example_plot}
    }
    \subfigure[Points of \(P\) (marked \(\circ\)) and \(N\) (marked \(\times\)) in the dual space. Dashed lines are the upper convex hull \(\U (P \cup N)\). Double lines join a point of \(P\) with a point of \(N\).]{
        \begin{tikzpicture}
        % axes
        \draw[->] (-1.5, 0) -- (3.5, 0);
        \draw[->] (0, -2.5) -- (0, 1.5);
        % ticks
        \draw (-1, -.1) -- (-1, .1) (1, -.1) -- (1, .1) (2, -.1) -- (2, .1) (3, -.1) -- (3, .1);
        \draw (-.1, -2) -- (.1, -2) (-.1, -1) -- (.1, -1) (-.1, 1) -- (.1, 1);
        % points
        \node at (-.5, -1.5) {$\circ$}; \node at (-.8, -1.2) {\small $f_1$};
        \node at (.5, .5) {$\circ$}; \node at (.4, .8) {\small $f_2$};
        \node at (2, 1) {$\circ$}; \node at (2.4, 1) {\small $f_3$};
        \node at (0, 0) {$\times$}; \node at (.3, -.3) {\small $g_1$};
        \node at (2, 0) {$\times$}; \node at (1.8, -.3) {\small $g_2$};
        \node at (3, -1) {$\times$}; \node at (2.7, -1.2) {\small $g_3$};
        % hull
        \draw[dashed] (-.5, -1.5) (.5, .5) -- (2, 1);
        \draw[dashed, double] (-.5, -1.5) -- (0, 0) -- (.5, .5) (2, 1) -- (3, -1);
    \end{tikzpicture}
    \label{subfig:1d_example_dual_points}
    }
    \hspace{8mm}
    \subfigure[Correspondence between 1-cells of \(U (P \cup N)\) and zeros of \(F\). We represent 1-cell as a graph of a linear function over an interval.]{
        \begin{tabular}{c|c|c}
        vertices & 1-cell & zero of \(F\) \\
        \hline
        \(f_1, g_1\) & \(3x\) over \([-\tfrac{1}{2}, 0]\) & -3 \\
        \(f_2, g_2\) & \(x\) over \([0, \tfrac{1}{2}]\) & -1 \\
        \(f_3, g_3\) & \(-2x+4\) over \([2, 3]\) & 2
    \end{tabular}
    \label{subfig:1d_example_cells_to_zeros}
    }
    \subfigure[Minkowski sum \(P \oplus N\) in the dual space. Dashed lines represent the upper convex hull.]{
        \begin{tikzpicture}[scale=.8]
        % axes
        \draw[->] (-1.5, 0) -- (5.5, 0);
        \draw[->] (0, -3) -- (0, 1.5);
        % ticks
        \draw (-1, -.1) -- (-1, .1) (1, -.1) -- (1, .1) (2, -.1) -- (2, .1) (3, -.1) -- (3, .1) (4, -.1) -- (4, .1) (5, -.1) -- (5, .1);
        \draw (-.1, -3) -- (.1, -3) (-.1, -2) -- (.1, -2) (-.1, -1) -- (.1, -1) (-.1, 1) -- (.1, 1);
        % points
        \filldraw (-.5, -1.5) circle (1.5pt); \node at (-1, -1) {\small $f_1 - g_1$};
        \filldraw (1.5, -1.5) circle (1.5pt);
        \filldraw (2.5, -2.5) circle (1.5pt);
        \filldraw (.5, .5) circle (1.5pt); \node at (.2, .8) {\small $f_2 - g_1$};
        \filldraw (2.5, .5) circle (1.5pt);
        \filldraw (3.5, -.5) circle (1.5pt);
        \filldraw (2, 1) circle (1.5pt); \node at (2, 1.3) {\small $f_3 - g_1$};
        \filldraw (4, 1) circle (1.5pt); \node at (4.4, 1.3) {\small $f_3 - g_2$};
        \filldraw (5, 0) circle (1.5pt); \node at (5, -.3) {\small $f_3 - g_3$};
        % hull
        \draw[dashed] (-.5, -1.5) -- (.5, .5) -- (2, 1) -- (4, 1) -- (5, 0);
    \end{tikzpicture}
    \label{subfig:1d_example_dual_sum}
    }
    \hspace{5mm}
    \subfigure[Tangents to \(U (P \cup N)\) and the corresponding affine regions. We represent each tangent line as a graph of a linear function. They are parameterised by their slope \(t\).]{
        \begin{tabular}{c|c|c}
            vertex & tangents & affine region \\
            \hline
            \( f_1 - g_1 \) & \(\{t x + \tfrac{t-3}{2}\}_{t \in [2, \infty)}\) & \((-\infty, -2]\) \\
            \( f_2 - g_1 \) & \(\{t x + \tfrac{1 - t}{2}\}_{t \in [\tfrac{1}{2}, 2]}\) & \([-2, -\tfrac{1}{2}]\) \\
            \( f_3 - g_1\) & \(\{t x + 1 - 2 t\}_{t \in [0, \tfrac{1}{2}]}\)& \([-\tfrac{1}{2}, 0]\) \\
            \(f_3 - g_2\) & \(\{t x + 1 - 4 t\}_{t \in [-1, 0]}\) & \([0, 1]\) \\
            \(f_3 - g_3\) & \(\{t x - 5 t\}_{t \in (-\infty, -1]}\) & \([1, \infty)\)
        \end{tabular}
        \label{subfig:1d_example_tangents_to_points}
    }
    \caption{Illustration of the results on a one-dimensional example.}
\end{figure}

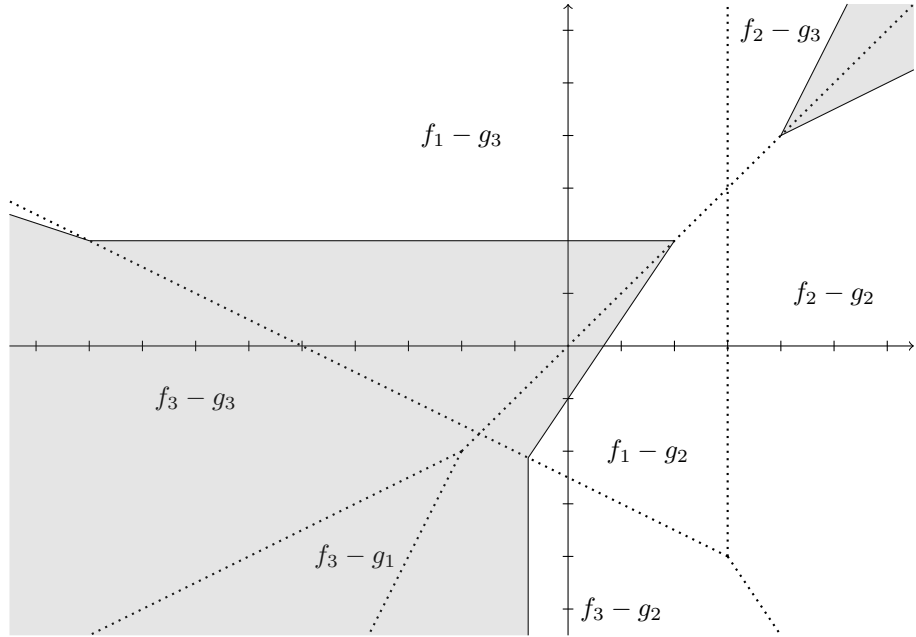
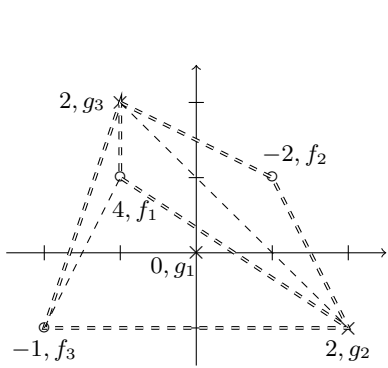
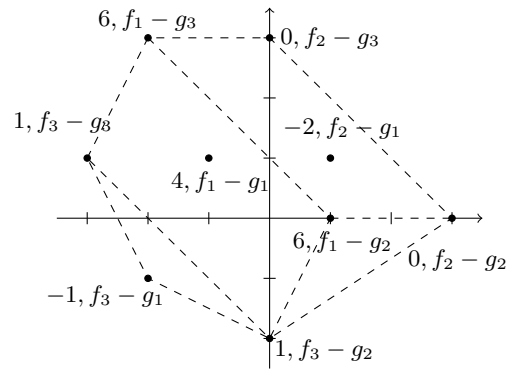
\begin{figure}[t]
    \centering
    \subfigure[The function \(F\) on the \(xy\) plane. Dotted lines mark the boundaries of affine regions. The annotation \(f_i - g_j\) means that \(F = f_i - g_j\) on the corresponding affine region. Solid lines indicate the zero set, and the shaded region contains arguments for which \(F\) is positive.]{
    \begin{tikzpicture}[scale=.7]
        % axes
        \draw[->] (-10.5, 0) -- (6.5, 0);
        \draw[->] (0, -5.5) -- (0, 6.5);
        % ticks
        \draw (-10, -.1) -- (-10, .1) (-9, -.1) -- (-9, .1) (-8, -.1) -- (-8, .1) (-7, -.1) -- (-7, .1) (-6, -.1) -- (-6, .1) (-5, -.1) -- (-5, .1) (-4, -.1) -- (-4, .1) (-3, -.1) -- (-3, .1) (-2, -.1) -- (-2, .1) (-1, -.1) -- (-1, .1) (1, -.1) -- (1, .1) (2, -.1) -- (2, .1) (3, -.1) -- (3, .1) (4, -.1) -- (4, .1) (5, -.1) -- (5, .1) (6, -.1) -- (6, .1);
        \draw (-.1, -5) -- (.1, -5) (-.1, -4) -- (.1, -4) (-.1, -3) -- (.1, -3) (-.1, -2) -- (.1, -2) (-.1, -1) -- (.1, -1) (-.1, 1) -- (.1, 1) (-.1, 3) -- (.1, 3) (-.1, 4) -- (.1, 4) (-.1, 5) -- (.1, 5) (-.1, 6) -- (.1, 6);
        % affine boundaries
        \draw[dotted, thick] (-2, -2) -- (-9, -5.5) (-2, -2) -- (-3.75, -5.5) (-2, -2) -- (6.5, 6.5);
        \draw[dotted, thick] (3, -4) -- (3, 6.5) (3, -4) -- (-10.5, 2.75) (3, -4) -- (4, -5.5);
        % zero set
        \draw (-10.5, 2.5) -- (-9, 2) -- (2, 2) -- (-.75, -2.125) -- (-.75, -5.5);
        \draw (5.25, 6.5) -- (4, 4) -- (6.5, 5.25);
        % annotations
        \node at (-2, 4) {$f_1 - g_3$};
        \node at (-7, -1) {$f_3 - g_3$};
        \node at (-4, -4) {$f_3 - g_1$};
        \node at (1, -5) {$f_3 - g_2$};
        \node at (1.5, -2) {$f_1 - g_2$};
        \node at (5, 1) {$f_2 - g_2$};
        \node at (4, 6) {$f_2 - g_3$};
        % positive region
        \draw[draw=none, fill=gray, fill opacity=.2] (-10.5, -5.5) -- (-10.5, 2.5) -- (-9, 2) -- (2, 2) -- (-.75, -2.125) -- (-.75, -5.5);
        \draw[draw=none, fill=gray, fill opacity=.2] (6.5, 6.5) -- (5.25, 6.5) -- (4, 4) -- (6.5, 5.25) -- (6.5, 6.5);
    \end{tikzpicture}
    \label{subfig:2d_example_plot}
    }
    \subfigure[Positions of \(P \cup N\) in the dual space. The first number indicates the \(z\)-coordinate. Dashed lines are projections of edges (1-cells) of the upper convex hull. The point \(\cR^{-1} (g_1)\) is fully below the hull.]{
    \begin{tikzpicture}
        % axes
        \draw[->] (-2.5, 0) -- (2.5, 0);
        \draw[->] (0, -1.5) -- (0, 2.5);
        % ticks
        \draw (-2, -.1) -- (-2, .1) (-1, -.1) -- (-1, .1) (1, -.1) -- (1, .1) (2, -.1) -- (2, .1);
        \draw (-.1, -1) -- (.1, -1) (-.1, 1) -- (.1, 1) (-.1, 2) -- (.1, 2);
        % points
        \node at (-1, 1) {$\circ$}; \node at (-.8, .55) {\small $4, f_1$};
        \node at (1, 1) {$\circ$}; \node at (1.3, 1.3) {\small $-2, f_2$};
        \node at (-2, -1) {$\circ$}; \node at (-2, -1.3) {\small $-1, f_3$};
        \node at (0, 0) {$\times$}; \node at (-.3, -.2) {\small $0, g_1$};
        \node at (2, -1) {$\times$}; \node at (2, -1.3) {\small $2, g_2$};
        \node at (-1, 2) {$\times$}; \node at (-1.5, 2) {\small $2, g_3$};
        % edges
        \draw[dashed, double] (-1, 2) -- (1, 1) -- (2, -1) -- (-2, -1) -- (-1, 2) -- (-1, 1) -- (2, -1);
        \draw[dashed] (-2, -1) -- (-1, 1) (-1, 2) -- (2, -1);
    \end{tikzpicture}
    \label{subfig:2d_example_dual_points}
    }
    \hspace{5mm}
    \subfigure[Projection of the faces of \(\U (P \oplus N)\) on the \(xy\)-plane. The first number indicates the \(z\)-coordinate; \(f_i - g_j\) is a shorthand for the dual point \(\cR^{-1} (f_i) + \cR^{-1} (g_j)\). Two points lie fully below the hull.]{
    \begin{tikzpicture}[scale=.8]
        % axes
        \draw[->] (-3.5, 0) -- (1, 0) (3, 0) -- (3.5, 0);
        \draw[->] (0, -2.5) -- (0, 3.5);
        % ticks
        \draw (-3, -.1) -- (-3, .1) (-2, -.1) -- (-2, .1) (-1, -.1) -- (-1, .1) (1, -.1) -- (1, .1) (2, -.1) -- (2, .1) (3, -.1) -- (3, .1);
        \draw (-.1, -2) -- (.1, -2) (-.1, -1) -- (.1, -1) (-.1, 1) -- (.1, 1) (-.1, 2) -- (.1, 2) (-.1, 3) -- (.1, 3);
        % points
        \filldraw (-1, 1) circle (1.5pt); \node at (-.8, .6) {\small $4, f_1 - g_1$};
        \filldraw (1, 0) circle (1.5pt); \node at (1.2, -.4) {\small $6, f_1 - g_2$};
        \filldraw (-2, 3) circle (1.5pt); \node at (-2, 3.3) {\small $6, f_1 - g_3$};
        \filldraw (1, 1) circle (1.5pt); \node at (1.2, 1.5) {\small $-2, f_2 - g_1$};
        \filldraw (3, 0) circle (1.5pt); \node at (3.1, -.7) {\small $0, f_2 - g_2$};
        \filldraw (0, 3) circle (1.5pt); \node at (1, 3) {\small $0, f_2 - g_3$};
        \filldraw (-2, -1) circle (1.5pt); \node at (-2.7, -1.3) {\small $-1, f_3 - g_1$};
        \filldraw (0, -2) circle (1.5pt); \node at (.9, -2.2) {\small $1, f_3 - g_2$};
        \filldraw (-3, 1) circle (1.5pt); \node at (-3.4, 1.6) {\small $1, f_3 - g_3$};
        % edges
        \draw[dashed] (-3, 1) -- (-2, 3) -- (0, 3) -- (3, 0) -- (0, -2) -- (-2, -1) -- (-3, 1) -- (0, -2) (-2, 3) -- (1, 0) -- (0, -2) (1, 0) -- (3, 0);
    \end{tikzpicture}
    \label{subfig:2d_example_dual_sum}
    }
    \caption{Illustration of the results on a two-dimensional example.}
\end{figure}

\end{document}